\definecolor{lime}{HTML}{A6CE39}
\DeclareRobustCommand{\orcidicon}{%
	\begin{tikzpicture}
	\draw[lime, fill=lime] (0,0) 
	circle [radius=0.16] 
	node[white] {{\fontfamily{qag}\selectfont \tiny ID}};
	\draw[white, fill=white] (-0.0625,0.095) 
	circle [radius=0.007];
	\end{tikzpicture}
	\hspace{-3mm}
}
\xdef\csname orcid\x\endcsname{\noexpand\href{https://orcid.org/\csname orcidauthor\x\endcsname}{\noexpand\orcidicon}}
\newcounter{algsubstate}
\renewcommand{\thealgsubstate}{\alph{algsubstate}}
\newtheorem{theorem}{Theorem}[section]
\newtheorem{remark}[theorem]{Remark}
\newtheorem*{theorem*}{Theorem}
\newcommand{\fig}{{Fig.}\@\xspace}
\newcommand{\eqn}{{Eq.}\@\xspace}
\newcommand{\tabref}[1]{\mbox{Table~\ref{#1}}}
\newcommand{\figref}[1]{\mbox{Fig.~\ref{#1}}}
\newcommand{\secref}[1]{\mbox{Section~\ref{#1}}}
\DeclareRobustCommand\onedot{\futurelet\@let@token\@onedot}
\def\@onedot{\ifx\@let@token.\else.\null\fi\xspace}
\def\eg{\emph{e.g}\onedot}
\def\ie{\emph{i.e}\onedot}
\def\etal{\emph{et al}\onedot}
\newcolumntype{L}[1]{>{\raggedright\arraybackslash}p{#1}}
\newcolumntype{C}[1]{>{\centering\arraybackslash}p{#1}}
\newcolumntype{R}[1]{>{\raggedleft\arraybackslash}p{#1}}
\begin{document}
%
\title{Learning to Minimize the Remainder in\\Supervised Learning}

\author{
    Yan~Luo\orcidA{}~\IEEEmembership{Student Member,~IEEE,},
    Yongkang~Wong\orcidB{}~\IEEEmembership{Member,~IEEE,}
	Mohan~S.~Kankanhalli\orcidC{}~\IEEEmembership{Fellow,~IEEE}, and
	Qi~Zhao\orcidD{}~\IEEEmembership{Member,~IEEE}%
	\IEEEcompsocitemizethanks{Manuscript received August 15, 2021;  revised January 11, 2022; accepted February 20, 2022. 
    	This research was funded in part by the NSF under Grants 1908711 and 1849107, and in part supported by the National Research Foundation, Singapore under its Strategic Capability Research Centres Funding Initiative. Any opinions, findings and conclusions or recommendations expressed in this material are those of the author(s) and do not reflect the views of National Research Foundation, Singapore.
    	The associate editor coordinating the review of this manuscript and approving it for publication was xxx xxx.
		Corresponding author: Q. Zhao (email: qzhao@cs.umn.edu).
		
		Y.~Luo and Q.~Zhao are with the Department of Computer Science and Engineering, University of Minnesota
		(email: luoxx648@umn.edu and qzhao@cs.umn.edu).
		Y.~Wong and M.~Kankanhalli are with the School of Computing, National University of Singapore
		(email: yongkang.wong@nus.edu.sg and mohan@comp.nus.edu.sg).
	}
}

\markboth{Preprint}%
{Luo~\MakeLowercase{\etal}: Learning to Minimize Remainder in Supervised Learning}



\maketitle

\begin{abstract}

The learning process of deep learning methods usually updates the model's parameters in multiple iterations.
Each iteration can be viewed as the first-order approximation of Taylor's series expansion. The remainder, which consists of higher-order terms, is usually ignored in the learning process for simplicity.
This learning scheme empowers various multimedia-based applications, such as image retrieval, recommendation system, and video search.
Generally, multimedia data (\eg images) are semantics-rich and high-dimensional, hence the remainders of approximations are possibly non-zero.
In this work, we consider that the remainder is informative and study how it affects the learning process.
To this end, we propose a new learning approach, namely gradient adjustment learning (GAL), to leverage the knowledge learned from the past training iterations to adjust vanilla gradients, such that the remainders are minimized and the approximations are improved.
The proposed GAL is model- and optimizer-agnostic, and is easy to adapt to the standard learning framework.
It is evaluated on three tasks, \ie image classification, object detection, and regression, with state-of-the-art models and optimizers.
The experiments show that the proposed GAL consistently enhances the evaluated models, whereas the ablation studies validate various aspects of the proposed GAL.
The code is available at \url{https://github.com/luoyan407/gradient_adjustment.git}.

\end{abstract}

\begin{IEEEkeywords}
	Supervised learning, deep learning, remainder, gradient adjustment.
\end{IEEEkeywords}

%
\IEEEpeerreviewmaketitle

\section{Introduction}
Multimedia applications are the systems that aim to deal with a variety of types of media \cite{Xia_TMM_2012,Ding_TMM_2015,Zhan_TMM_2018,Cho_TMM_2019,Xu_TMM_2020}, such as image, text, etc. Specifically, image classification \cite{He_CVPR_2016,Krizhevsky_NIPS_2012,Tan_ICML_2019} and object detection \cite{He_CVPR_2017,Ren_NIPS_2015} are common components for processing image data.
One of the major challenges is that the image data are semantics-rich and high-dimensional at a large scale \cite{Deng_CVPR_2009,Lin_ECCV_2014}.
Therefore, how to efficiently learn the mapping between images and ground-truth labels is crucial.
Specifically, a learning process
consists of multiple iterations where the parameters of a model are updated by minimizing the scalar parameterized objective function.
Given some training samples and a loss function, each of the training iteration performs a first-order approximation, that is, the Taylor's series expansion omitting higher-order terms \cite{Boyd_CUP_2004}.
\figref{fig:teaser} illustrates the approximation.
Briefly, given a sample $x_{t}, y_{t}$, the loss $\ell_{\sigma}(z_{t})$ is iteratively minimized by subtracting the term $\nabla_{z_{t}}\ell^{\top}_{\sigma}(z_{t})\Delta z_{t}$ while discarding the remainder $r(z_{t})$.
Gradient descent is a simple yet effective solution that uses the gradients to expand the approximation.

\begin{figure}[!t]
	\centering
	\includegraphics[width=1\linewidth]{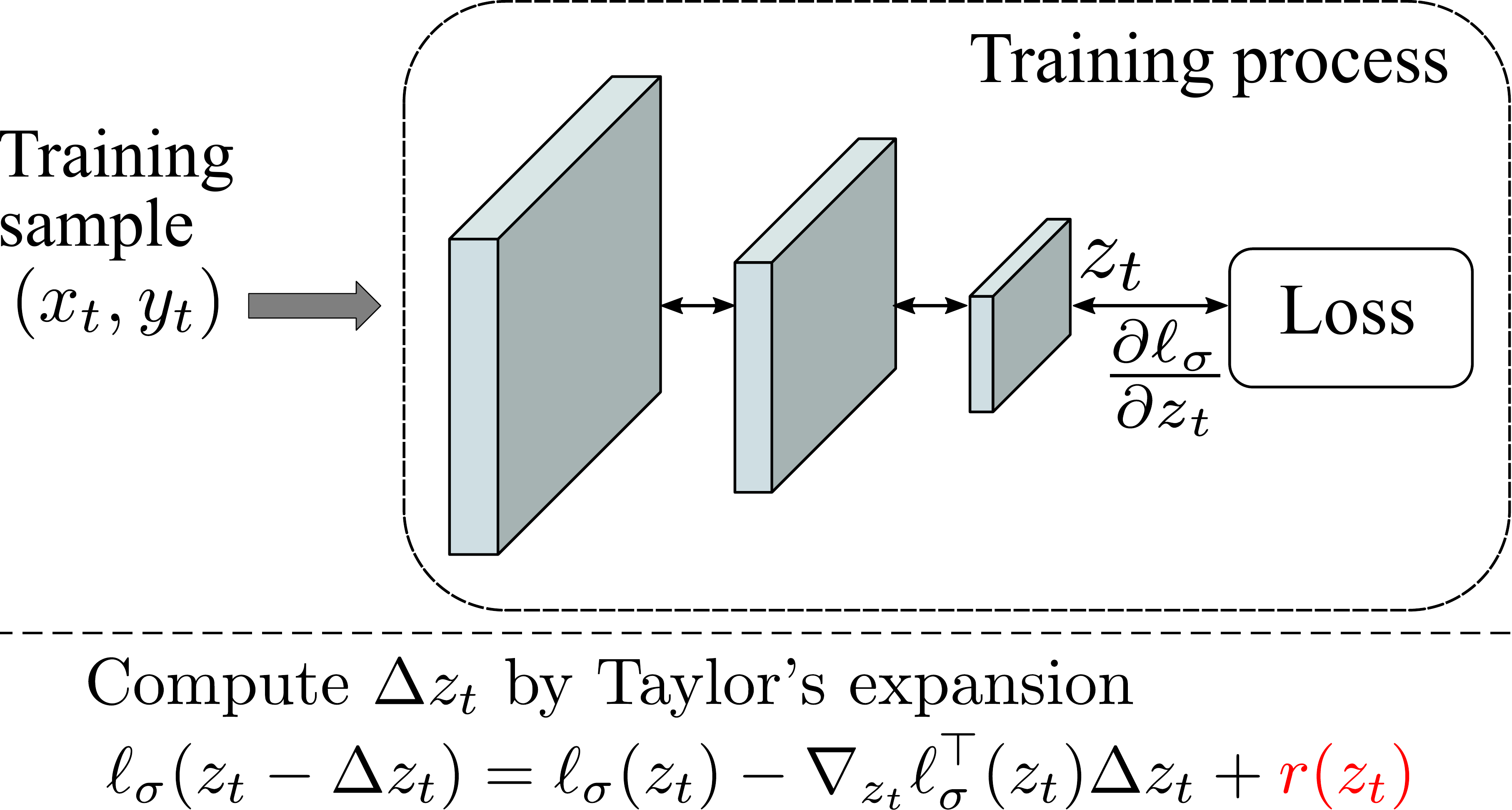}
	\caption{\label{fig:teaser}
	    Illustration of the problem of minimizing the remainder $r(z_{t})$ (highlighted in \textcolor{red}{red}), which is usually ignored.
	    Here, $\Delta z_{t} =\hat{\eta}\frac{\partial \ell_{\sigma}}{\partial z}$ for simplicity during the standard learning process.
	    As $r(z_{t})$ is possibly not zero in real-world learning tasks, this work studies how to learn to minimize $r(z_{t})$ by adjusting $\Delta z_{t}$ and its influence during the learning process. Following the convention, $\ell_{\sigma}(\cdot, \cdot)=\ell(\sigma(\cdot), \cdot)$ represents an activation function followed by a loss function. $\ell_{\sigma}(z_{t}, y_{t})$ is further simplified to $\ell_{\sigma}(z_{t})$. 
    	}
\end{figure}


However, the remainder that is left in each training iteration is possibly non-zero.
The reasons are three-fold in terms of the problem nature, learning framework, and model generalizability.
Firstly, the diversity of task-dependent semantics and high dimensionality of image data form the learning problems that are difficult to find an approximation with zero remainder.
Secondly, the stochastic process is proven to be helpful for preventing the learning process from overfitting \cite{Bottou_SIAM_2018,Robbins_AMS_1951} and is widely-used in computer vision tasks \cite{Deng_CVPR_2009,Lin_ECCV_2014}. Inevitably, the approximation in the stochastic process could be affected by the underlying noise distribution \cite{Zhu_ICML_2019}.
Lastly, although deep learning techniques \cite{He_CVPR_2016,Krizhevsky_NIPS_2012,Tan_ICML_2019} have achieved remarkable success, the generalizability of models still has room for improvement in producing a better approximation that is with smaller remainder using a variety of labeled images.

In this work, we study the remainder in three tasks, namely, image classification, object detection, and regression. The remainder is informative and could be helpful for improving the learning process. Thus, we aim to minimize the remainder that is difficult to compute and study how it affects the learning process. 
To this end, we propose a learning approach, named gradient adjustment learning (GAL), to leverage the knowledge learned from the past learning steps to adjust the current gradients so the remainder can be minimized. 

The advantages of formulating the minimization of the remainder as a learning problem are two-fold. 
Firstly, instead of limiting to the observed samples at each iteration, the proposed GAL has a broader view on the correlation between all seen samples till the current iteration and the resulting remainders.
Secondly, the remainder which contains all higher-order terms is informative. So, it is a good indicator to gauge if the adjusted gradient better fits the approximation than the vanilla gradient.
However, it is challenging to predict a gradient adjustment vector as the prediction is a continuous real value, instead of discrete labels.
The expected precision is remarkably higher than the one in the classification task, as the values of gradients are sensitive yet decisive to the learning process.
To solve this problem, we devise the proposed GAL to determine how much adjustment will take place, 
which is easy to work with any network model, \eg~multi-layer perceptron (MLP).
Since the optimization process is a mini-ecosystem and gradient works closely with the optimization methods, we investigate the efficacy of the proposed GAL with several state-of-the-art models and optimizers in image classification, object detection, and regression tasks.
The main contributions are as follows.
\begin{itemize}
    \item We propose a novel learning approach, named gradient adjustment learning (GAL), which learns to adjust vanilla gradients for minimizing the remainders of approximations in the learning process.
    We provide the theoretical analysis of the generalization bound and the error bound of the proposed learning approach.
    The proposed approach is model- and optimizer-agnostic.
    \item We propose a safeguard mechanism with a conditional update policy (\ie by verifying the update using the adjusted gradient) to guarantee that the adjusted gradient would lead to an effective descent.
    \item We conduct comprehensive experiments and analyses on CIFAR-10/100~\cite{Krizhevsky_TR_2009}, ImageNet~\cite{Deng_CVPR_2009}, MS COCO~\cite{Lin_ECCV_2014}, Boston housing \cite{Harrison_JEEM_1978}, diabetes \cite{Dua_Report_2019}, and California housing \cite{Pace_SPL_1997}. 
    The experiments show that the proposed GAL demonstrably improves the learning process.
\end{itemize}

\section{Related Work}

\noindent \textbf{Optimization Methods}. 
Stochastic optimization methods often use gradients to update the model parameters \cite{Hinton_RMSProp_2012,Kingma_arXiv_2014,Liu_ICLR_2020O,Luo_ICLR_2018,Robbins_AMS_1951,Zhuang_Neurips_2020}.
In deep learning, stochastic gradient descent (SGD)~\cite{Robbins_AMS_1951} is an influential and practical optimization method.
It takes the anti-gradient as the parameters' update for the descent, based on the first-order approximation \cite{Boyd_CUP_2004}. 
Along the same line, several first- and second-order methods are devised to guarantee convergence to local minima under certain conditions~\cite{Carmon_SIAM_2018,Jin_ICML_2017,Reddi_ICAIS_2018}. 
Nevertheless, these methods are computationally expensive and not feasible for learning settings with large-scale data.
In contrast, adaptive methods, such as Adam~\cite{Kingma_arXiv_2014}, RMSProp~\cite{Hinton_RMSProp_2012}, and Adabound~\cite{Luo_ICLR_2018}, show remarkable efficacy in a broad range of problems~\cite{Hinton_RMSProp_2012,Kingma_arXiv_2014,Luo_ICLR_2018}. 
Zhang~\etal propose an optimization method that wraps an arbitrary optimization method as a component to improve the learning stability \cite{Zhang_NIPS_2019}.
\cite{Andrychowicz_NIPS_2016,Chen_ICML_2017,Ji_IJCAI_2019} learn an optimizer to adaptively compute the step length for updating the models on synthetic or small-scale datasets.
These methods are contingent on vanilla gradients to update a model. 
In this work, we conduct a study to show how adjusted gradients influence the learning process.

\noindent \textbf{Gradient-based Methods}.
Given the training data and corresponding ground-truth, a gradient is computed by encoding the task-dependent semantics.
Gradient is crucial in the back-propagation, which enables the learning process to update models' weights such that the loss is minimized \cite{Rumelhart_Nature_1986}.
Gradient-based methods have been proven in modern deep learning models \cite{He_CVPR_2016,Krizhevsky_NIPS_2012,Tan_ICML_2019,Tan_ICML_2021,Dosovitskiy_ICLR_2021,Tolstikhin_arXiv_2021}, which serve as backbones to facilitate a broad range of multimedia applications \cite{Xia_TMM_2012,Ding_TMM_2015,Cho_TMM_2019,Xu_TMM_2020,Cong_TMM_2012,Yadati_TMM_2014,Bu_TMM_2014,Zhang_TMM_2014,Cho_TMM_2015,Zhang_TMM_2019,Li_IJCV_2020}.

Except for updating models' weights, gradients are versatile in regulating or regularizing the learning process, e.g., gradient alignment \cite{Li_WACV_2020,Luo_ECCV_2020}, searching for adversarial perturbation \cite{Goodfellow_ICLR_2015}, sharpness minimization \cite{Foret_ICLR_2021}, making decision for choosing hyperparameters \cite{Pedregosa_ICML_2016}, etc.
Specifically, Lopez-Paz and Ranzato propose a gradient episodic memory method that alleviates catastrophic forgetting in continual learning by maintaining the gradient for the update to fit with memory constraints~\cite{Lopez_NIPS_2017}. 
The gradient is aligned to improve the agreement between the knowledge learned from the completed training steps and the new information being used for updating the model~\cite{Luo_TPAMI_2019}. 
In transfer learning, gradients computed by multiple source domains are combined to minimize the loss on the target domain~\cite{Li_WACV_2020}. 
The proposed GAL is model-agnostic and thus can benefit these applications.

\begin{figure*}[!t]
	\centering
    \includegraphics[width=1\linewidth]{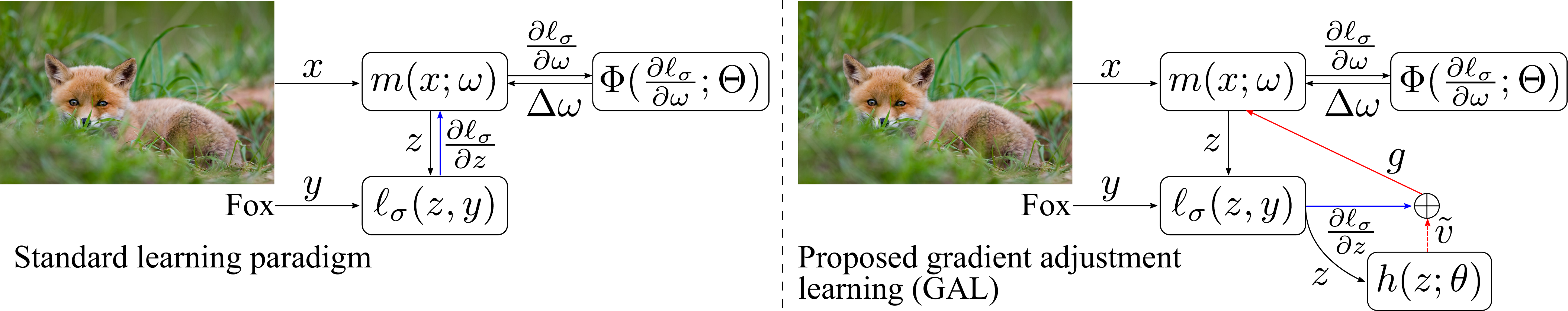}
	\caption{\label{fig:framework}
    	Illustration of standard and proposed learning paradigm. 
    	Note that the proposed learning paradigm is model- and optimizer-agnostic. 
    	If $h(z;\theta)$ always outputs $\bm{0}$, the proposed learning paradigm is reduced to the standard learning paradigm.
	}
\end{figure*}


\noindent \textbf{Remainder of Approximations}.
Approximation theory is the branch of mathematics which studies the process of approximating general functions \cite{Achieser_CC_2013,Timan_Elsevier_2014}.
For the exact mapping problem with deterministic functions, there are a considerable number of works that study and evaluate the remainder in low-dimensional variable spaces \cite{Berz_RC_1998,Milne_JRNBS_1949,Stancu_MC_1963,Stancu_SIAM_1964}.
However, there is no exact mapping between the input and output in computer vision tasks, where the input image is in a high-dimensional space \cite{Deng_CVPR_2009,Lin_ECCV_2014}.
This makes it difficult to exactly compute the remainder.
As a result, the remainders of approximations are ignored for the sake of simplicity in the learning process \cite{He_CVPR_2016,Krizhevsky_NIPS_2012,Tan_ICML_2019}.
This work is the first to study the effect of minimizing the remainder as a learning problem on large-scale data.

\section{Problem Formalization}
\label{sec:problem}


Without loss of generality, we consider the standard classification problem where the formulation can be adapted to other learning problems with minor modifications.
Given a training set {\small $D=\{(x_{i},y_{i})|1\le i \le N\}$},
where $x_{i} \in \mathcal{X}$ is the data and $y_{i} \in \{0,1\}^{d}$ is the corresponding ground-truth, 
\ie~$d$ dimensional binary labels, 
a learnable model $m: \mathcal{X} \xrightarrow[]{\omega} \mathbb{R}^{d}$ with parameters $\omega$ is optimized to minimize the loss $\ell$.
According to the empirical risk minimization principle~\cite{Vapnik_TNN_1999}, it can be written as
\begin{align}
    \underset{\omega}{\text{minimize}}\ \frac{1}{|D|} \sum_{(x_{t},y_{t})\in D}^{} \ell\big(\sigma(m(x_{t}; \omega)), y_{t}\big)
\end{align}
where $|D|$ is the cardinality of $D$ and $\sigma: \mathbb{R}^{d} \xrightarrow[]{\omega} [0,1]^{d}$ is an activation function, \eg~softmax layer.

The problem of design and training of $m(\cdot; \omega)$ has been extensively studied \cite{He_CVPR_2016,Krizhevsky_NIPS_2012,Tan_ICML_2019}, and it is not the focus of this work. 
Instead, we focus on the loss w.r.t. the discriminative features $z$, which is the output of $m(\cdot; \omega)$.
Let $\ell_{\sigma}(z)$ denote $\ell(\sigma(z),y)$ for simplicity.
By doing Taylor series expansion,
\begin{align}
\ell_{\sigma}(z_{t}-\Delta z_{t}) = \ell_{\sigma}(z_{t}) - \nabla_{z_{t}} \ell_{\sigma}^{\top}(z_{t})\Delta z_{t} + r(z_{t})
\label{eqn:expansion}
\end{align}
where the loss remainder {\small $r(z_{t})=o(\Delta z_{t})$} is the higher order term w.r.t. $\Delta z_{t}$.
The second term, {\small $\nabla_{z_{t}} \ell_{\sigma}^{\top}(z_{t})\Delta z_{t}$}, is the directional derivative at $z_{t}$ in the direction $\Delta z_{t}$. 
Mathematically, it is difficult to compute higher order derivatives therein for $o(\Delta z_{t})$.
Therefore, maximizing the margin between $\ell_{\sigma}(z_{t})$ and $\ell_{\sigma}(z_{t}-\Delta z_{t})$, which is equivalent to convergence enhancement, is challenging.
Moreover, $(z_{t},y_{t})$ follows some stochastic process and would vary with the iterations. 
Different $(z_{t},y_{t})$ pair may contribute unevenly to the learning process.


\figref{fig:framework} (left) shows a standard learning approach where $r(z_{t})$ is omitted. 
We denote $\Phi(\cdot;\Theta)$ as an optimizer 
with a set of hyperparameters $\Theta$ such as learning rate, momentum, weight decay, etc. 
The key step in this optimization process is that loss function $\ell$ takes the prediction $\hat{y}=\sigma(z)$ and the ground-truth $y$ as input to compute the gradient $\frac{\partial \ell_{\sigma}}{\partial z} = \nabla_{z}\ell_{\sigma}(\hat{y}, y)$. 
According to the chain rule, the gradient $\frac{\partial \ell_{\sigma}}{\partial \omega}$ is computed by $\frac{\partial \ell_{\sigma}}{\partial z} \frac{\partial z}{\partial \omega}$. 
Next, $\Delta \omega = \Phi(\frac{\partial \ell_{\sigma}}{\partial \omega};\Theta)$ is computed to update $\omega$.

In the standard learning approach, the gradient $\frac{\partial \ell_{\sigma}}{\partial z}$ is mathematically computed and can be considered as a local choice over observed inputs $(x,y)$ at each iteration.
Making a local choice at each step can be viewed as a greedy strategy and may find less-than-optimal solutions~\cite{Black_DADS_2005}.
In contrast, this work adjusts the gradient by an adjustment module which aims to minimize the remainder (as shown in \figref{fig:framework} (right)).
Correspondingly, the adjustment can be viewed as an addition of two vectors, where one is the vanilla gradient and the other is the vector generated by the adjustment module.
A geometric interpretation is shown in \figref{fig:frame_c}.


\section{Gradient Adjustment Learning}

In this section, we first describe the gradient adjustment mechanism in a supervised learning framework.
Then, the training process of the proposed gradient adjustment module is detailed.
Finally, we discuss its theoretical properties. 

\subsection{Gradient Adjustment in Learning Process}

Here, we introduce the integration of the proposed GAL into the standard learning approach.
We first define a gradient adjustment module $h(\cdot;\theta)$ (see \fig \ref{fig:framework}), 
which aims to model the correlation between the adjustment at point $z$ and the corresponding loss remainder $r$, \ie
\begin{align}
v = h(z;\theta), \ \ v\in \mathbb{R}^{d}
\label{eqn:predict}
\end{align}

Different from a classifier that predicts a confidence score between 0 and 1, the proposed GAL learns to predict a gradient adjustment vector which tends to be small, sophisticated, and subtle.
To curb its volatility, which could overwhelm the gradient and ruin the learning process, we apply a normalization with $l^{2}$ norm to adaptively scale it to coincide with the gradient, \ie
\begin{align}
	\tilde{v}=\alpha \Big| \frac{\partial \ell_{\sigma}}{\partial z} \Big| v/ |v|
\label{eqn:nml}
\end{align}
where $\alpha\in [0,1]$ is a scalar that constrains the relative strength of adjustment referencing to the magnitude of $\frac{\partial \ell_{\sigma}}{\partial z}$. 
$\alpha=0$ implies no adjustment will be performed.
The normalized feature $\tilde{v}$ is added to the computed vanilla gradient and is used as the input to the optimizer for model updating
\begin{align}
	g=\frac{\partial \ell_{\sigma}}{\partial z}+\tilde{v}
\label{eqn:adj}
\end{align}
The gradient adjustment module $h(\cdot;\theta)$ can be any type of DNNs, such as MLP, CNN, or RNN. 
As the computed adjustment is possibly negative in some dimensions, we remove the final activation layer (\eg~softmax layer). 

\begin{figure}[!t]
	\centering
	\includegraphics[width=0.75\linewidth]{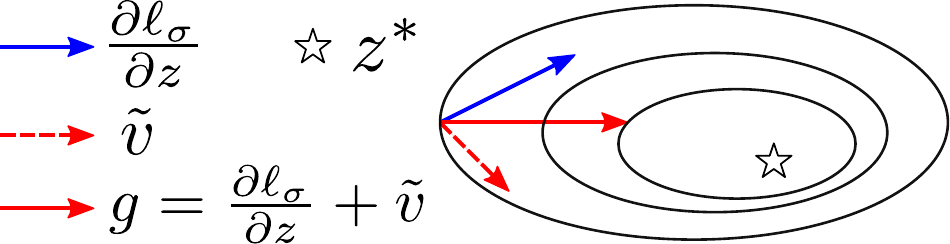}%
	\caption{\label{fig:frame_c}
    	Geometric interpretation of the proposed GAL. The adjustment is performed by a vector addition operation.
    	}
\end{figure}

Lines 7--10 in Algorithm~\ref{alg:gl} are the conditional update policy that compute update $\Delta \omega$ based on the relationship between $\ell_{\sigma} (z-\tilde{\eta}g)$ and $\ell_{\sigma} (z)$.
Here, $\tilde{\eta}$ is the tentative learning rate and $\ell_{\sigma} (z-\tilde{\eta}g)$ is the tentative loss (detailed in \secref{sec:modelTraining}).
Checking $\ell_{\sigma} (z-\tilde{\eta}g) \le \ell_{\sigma} (z)$ is able to detect if $g$ is not a good fit to reduce the loss.
In this case, we alternatively use vanilla gradient for update.
This can be regarded as a safeguard mechanism to verify whether the adjusted gradient $g$ leads to an effective descent.

\begin{algorithm}[!t]
	\caption{Gradient Adjustment Learning}\label{alg:gl} 
	\begin{algorithmic}[1]
		\State \textbf{Input}: $D$, $m(\cdot;\omega)$, $h(\cdot;\theta)$, $\Phi(\cdot;\Theta)$ (learning rate $\eta \in \Theta$), magnitude ratio $\alpha \in [0,1]$, adaptive scalar $\beta$ so $\tilde{\eta} = \beta\eta$
		\For {Each pair $(x,y)\in D$}
		\State $z=m(x;\omega),\ \hat{y}=\sigma(z)$ 
		\State $\frac{\partial \ell_{\sigma}}{\partial z} = \nabla_{z}\ell_{\sigma} (z)$
		\State Predict gradient adjustment $v = h(z;\theta)$
		\State Adjust gradient $g=\frac{\partial \ell_{\sigma}}{\partial z}+\tilde{v}, \ \ \tilde{v}=\alpha|\frac{\partial \ell_{\sigma}}{\partial z}| v/ |v|$
		\If {$\ell_{\sigma} (z-\tilde{\eta}g)\le \ell_{\sigma} (z)$}
		\State $\Delta \omega = \Phi(g\frac{\partial z}{\partial \omega};\Theta)$
		\Else
		\State $\Delta \omega = \Phi(\frac{\partial \ell_{\sigma}}{\partial z}\frac{\partial z}{\partial \omega};\Theta)$
		\EndIf
		\State Update parameters $\omega \leftarrow \omega - \Delta \omega$
		\State Minimize the remainder (the objective (\ref{eqn:obj_remainder})):
			\State Compute $\frac{\partial r}{\partial v}$
			\State Compute the update $\Delta \theta = \Phi(\frac{\partial r}{\partial v} \frac{\partial v}{\partial \theta};\Theta)$
			\State Update the adjustment module's parameters 
			\State $\theta \leftarrow \theta - \Delta \theta$
		\EndFor
	\end{algorithmic} 
\end{algorithm}

\subsection{Adjustment Module Training}
\label{sec:modelTraining}

As discussed in \secref{sec:problem}, the remainder $r(z_{t})$ in \eqn~(\ref{eqn:expansion}) is difficult to estimate in practice.
However, the remainder can be modeled with the other three terms in the equation. 
So, this turns the estimation to a learning problem, \ie
\begin{gather}
    \underset{\theta}{\text{minimize}}\ |r(z_{t})|, \label{eqn:obj_remainder}\\
    r(z_{t}) = \ell_\sigma(z_{t}-\tilde{\eta} g)-\ell_\sigma(z_{t})+\tilde{\eta}\nabla \ell_\sigma(z_{t})^{\top}g, \label{eqn:loss_remainder}
\end{gather}
where $\ell_\sigma(z_{t}-\tilde{\eta} g)$ is the tentative loss and $\tilde{\eta}=\beta\eta$ is the  tentative learning rate. 
Briefly, the tentative loss is used to evaluate whether the adjusted gradient $g$ is better than $\frac{\partial \ell}{\partial z}$. 
Although $z_{t}-\tilde{\eta} g$ is a decision condition, it still needs a learning rate to fit into the gradient descent scheme. 
A straightforward way of doing it is by using a hyperparameter $\beta$ as weight on the learning rate $\eta$ for parameters update. 
In this way, $\tilde{\eta}$ is adaptive to $\eta$. 
Note that $|r(z_{t})|$ is minimized in objective (\ref{eqn:obj_remainder}) rather than $r(z_{t})$. 
This is because the prediction is subtle and it is possible to overfit or underfit the remainder.

From \eqn~(\ref{eqn:predict}) and (\ref{eqn:nml}), it can be seen that $g$ is a function of $\theta$. The objective (\ref{eqn:obj_remainder}) provides information for adjusting the gradient in a direction that reduces the remainder of first-order Taylor approximation.

\begin{figure*}[!t]
	\centering
	\begin{tabular}{ccccc}
	\footnotesize{Gradient descent} & \footnotesize{RMSProp} & \footnotesize{Adam} & \footnotesize{Lookahead} & \footnotesize{Adabound} \\
	\subfloat{%
		\includegraphics[trim=0 0 0 15,clip,width=0.174\linewidth]{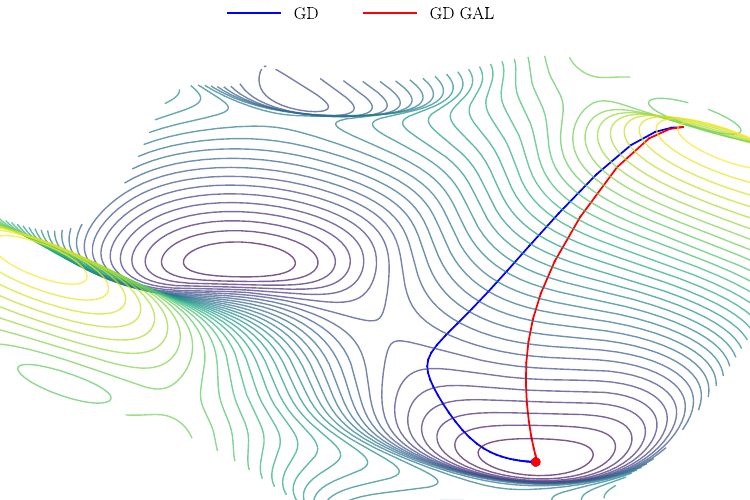}%
	}%
	&
	\subfloat{%
		\includegraphics[trim=0 0 0 15,clip,width=0.174\linewidth]{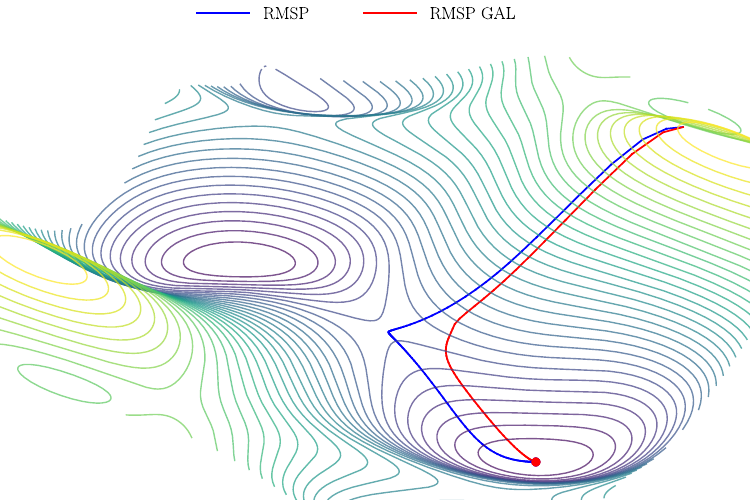}%
	}
	&
	\subfloat{%
		\includegraphics[trim=0 0 0 15,clip,width=0.174\linewidth]{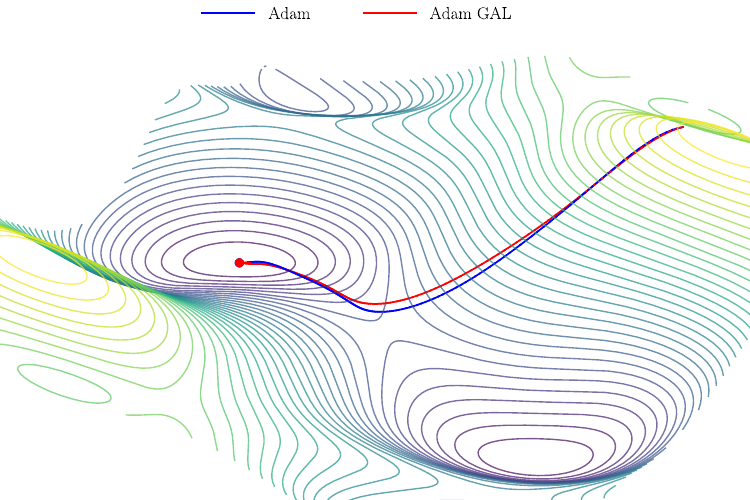}%
	}
	&
	\subfloat{%
		\includegraphics[trim=0 0 0 15,clip,width=0.174\linewidth]{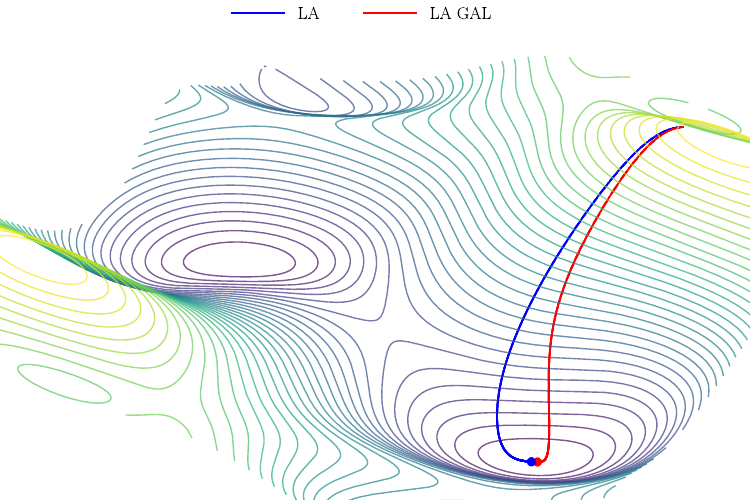}%
	}%
	&
	\subfloat{%
		\includegraphics[trim=0 0 0 15,clip,width=0.174\linewidth]{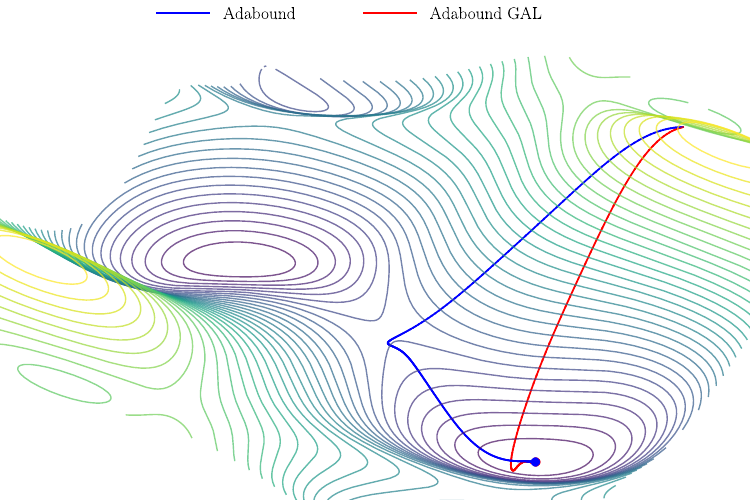}%
	}  \\
	\subfloat{%
		\includegraphics[width=0.174\linewidth]{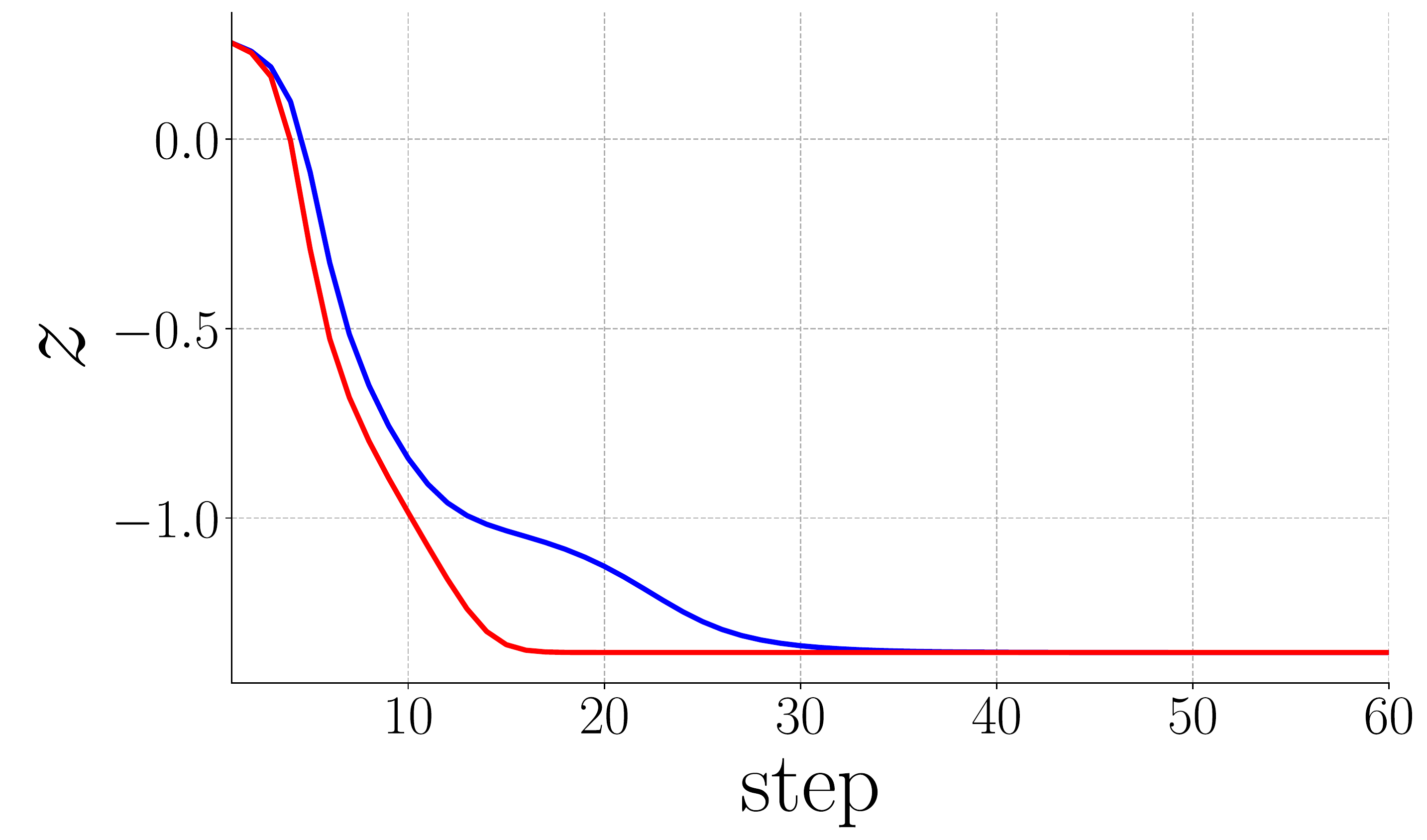}%
	} 
	&
	\subfloat{%
		\includegraphics[width=0.174\linewidth]{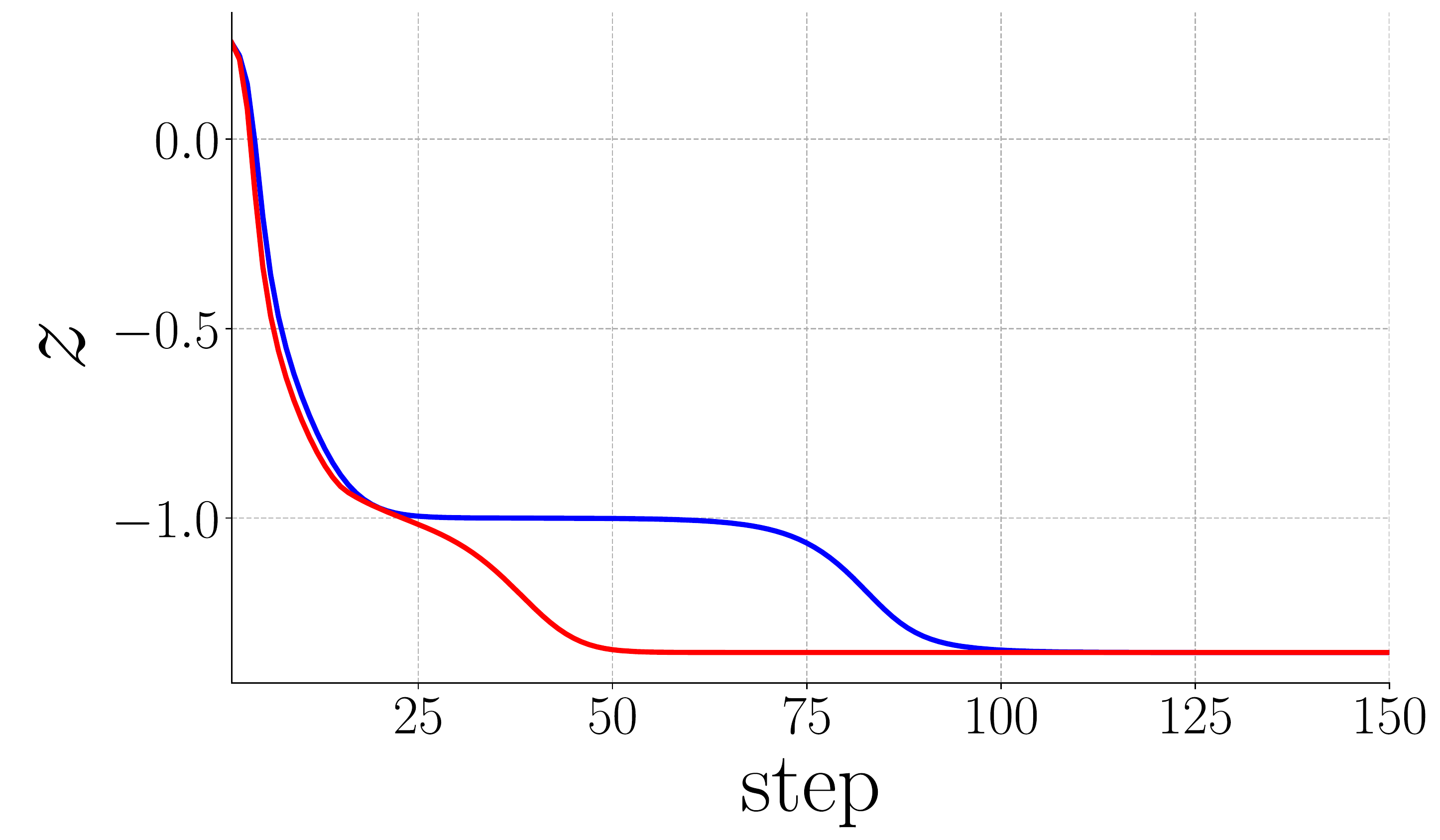}%
	}
	&
	\subfloat{%
		\includegraphics[width=0.174\linewidth]{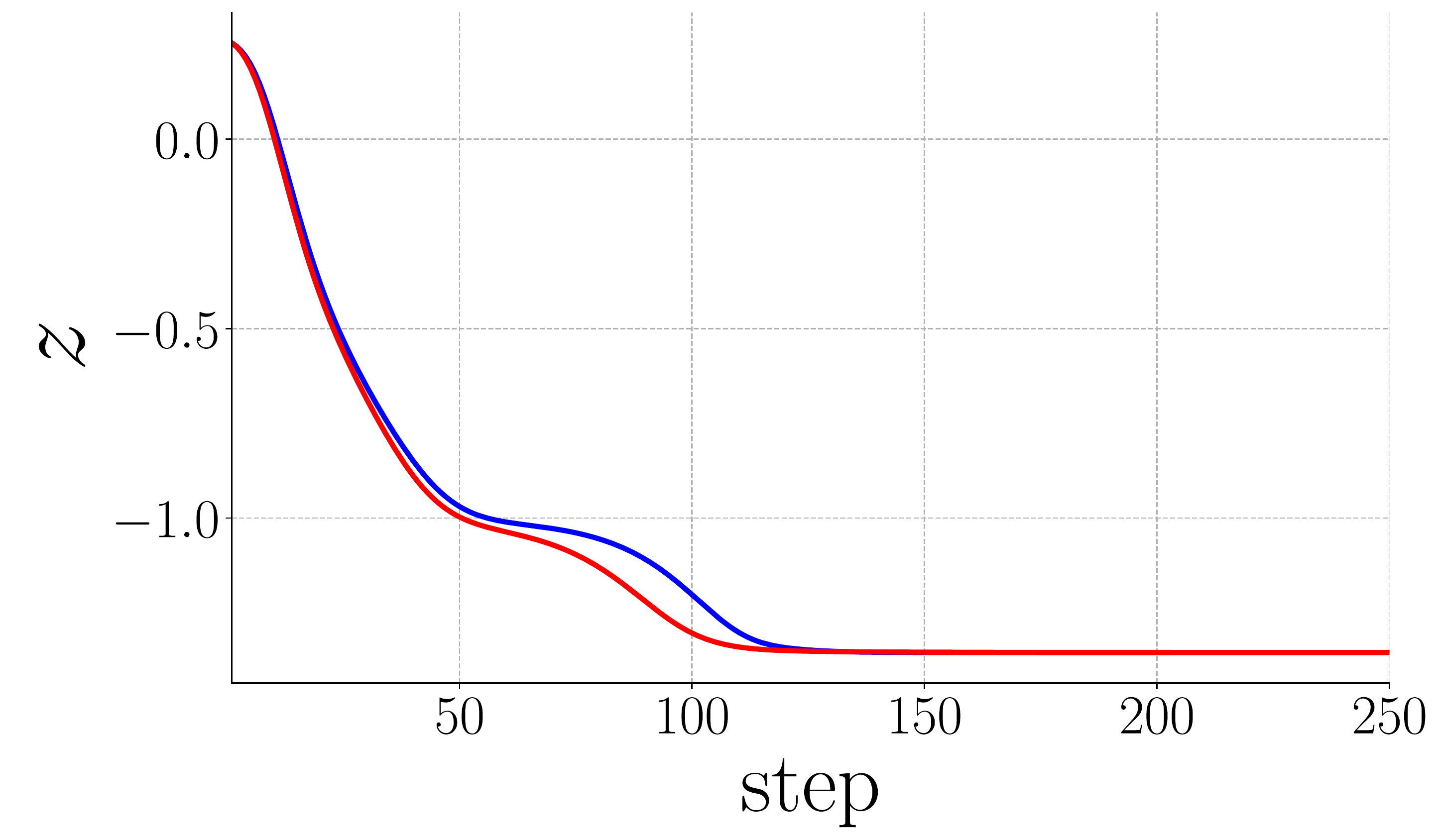}%
	}
	&
	\subfloat{%
		\includegraphics[width=0.174\linewidth]{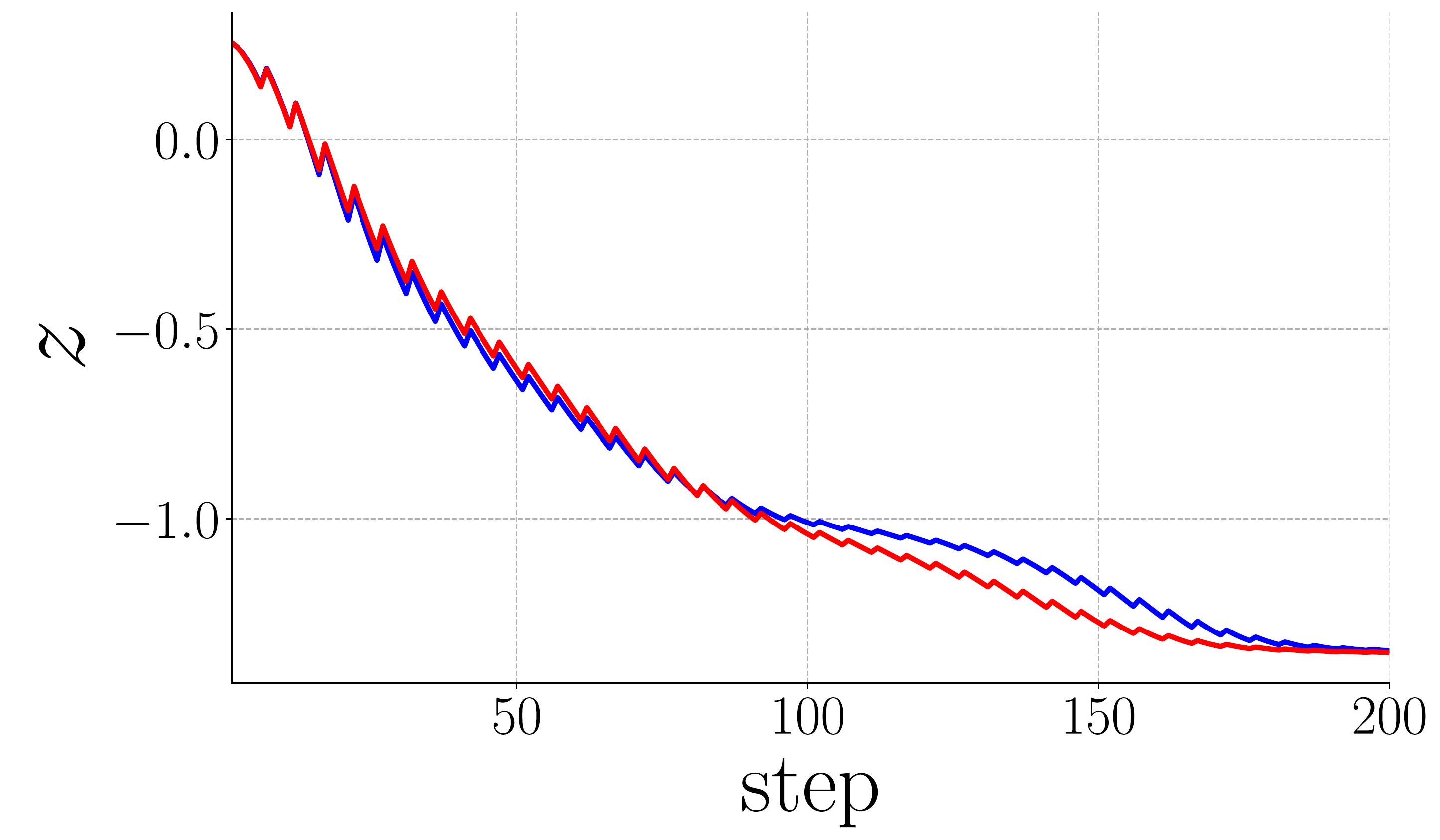}%
	}%
	&
	\subfloat{%
		\includegraphics[width=0.174\linewidth]{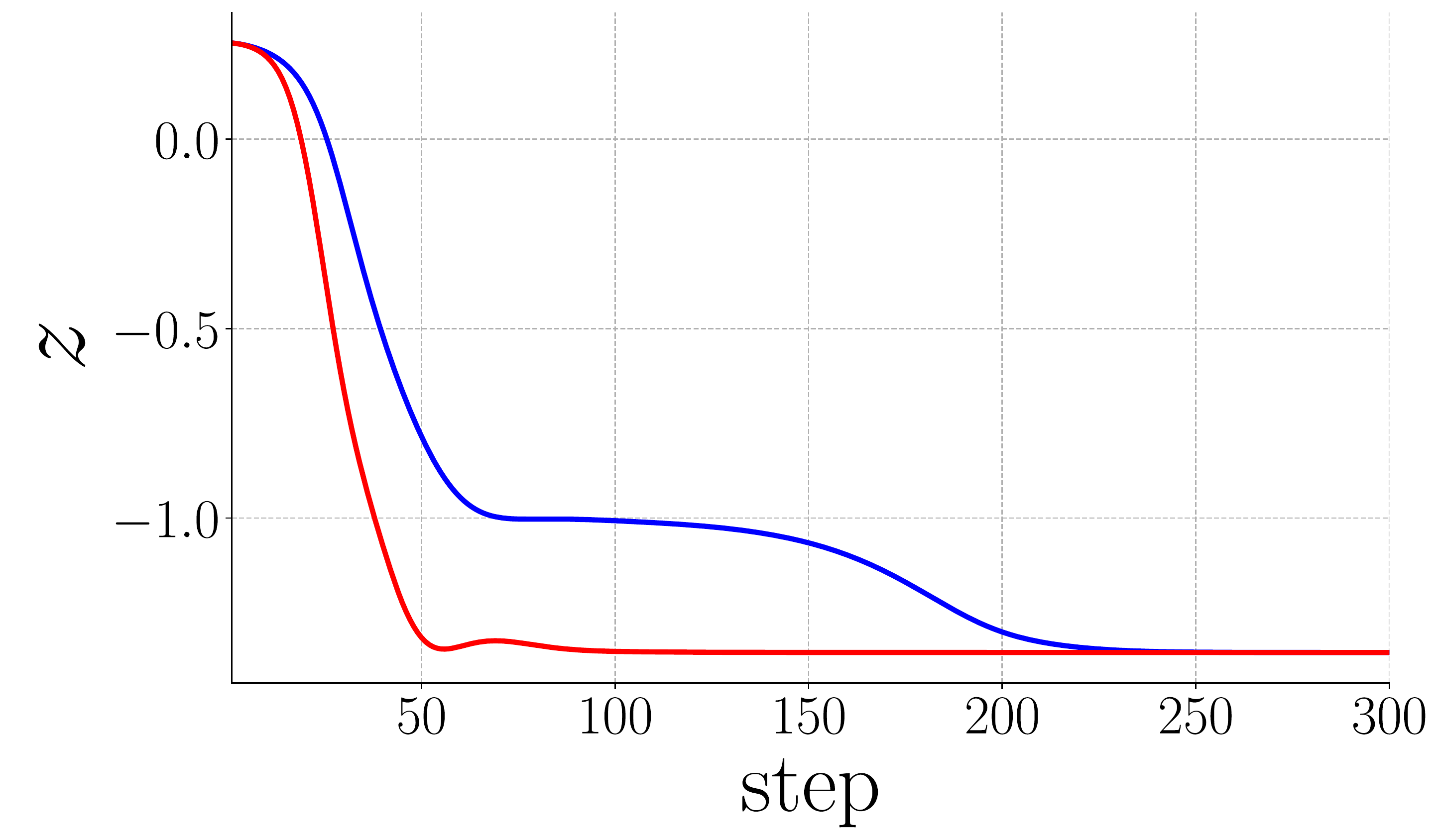}%
	} \\
	\end{tabular}
	\caption{\label{fig:illustration}
	    Illustrations of the effect of GAL (\textcolor{red}{red} path/curves) on convergence with various optimizers, in comparison with the standard process (\textcolor{blue}{blue} path/curves). The top row are convergence paths, while the bottom are the corresponding loss curves. 
	    The problem is publicly available in \cite{Luo_TPAMI_2019}.
	    }
\end{figure*}

\subsection{Theoretical Properties}

This section presents the learning guarantee and remainder error bound for the GAL problem. 
%
For simplicity, we denote $h(z;\theta)$ as $h(z)$. Let $v^{*}\in \mathbb{R}^{d}$ be the target adjustment so $z-\tilde{\eta}(\nabla f(z)+v^{*})=z^{*}$. As the gradient adjustment vector is usually small, we assume there exist $a, b\in \mathbb{R}$ so that $v,v^{*}\in [a,b]^{d}\subseteq \mathbb{R}^{d}$, and $z$ is drawn i.i.d. according to the unknown distribution $\mathcal{D}$ and $v^{*}=h^{*}(z)$ where $h^*(\cdot)$ is the target labeling function. Moreover, we follow the problem setting in \cite{Mohri_MIT_2012} to restrict the loss function to be the $\ell_{p}$ loss ($p\ge 1$) for generalization bound.
GAL can be considered as a variant of regression problem that finds the hypothesis $h: \mathbb{R}^{m}\rightarrow [a,b]^{d}$ in a set $\mathcal{H}$ with small generalization error w.r.t. $h^{*}$, \ie
\begin{align*}
R_{\mathcal{D}}(h)=E_{z\sim \mathcal{D}} \big[\ell_{p}(h(z), h^{*}(z)) \big].
\end{align*}
In practice, as $\mathcal{D}$ is unknown, we use the empirical error for approximation over samples in dataset $D$, \ie
\begin{align*}
\hat{R}_{D}(h)=\frac{1}{|D|}\sum_{i=1}^{|D|}\ell_{p} \big( h(z_{i}), v^{*}_{i} \big),
\end{align*}

\begin{theorem}[Generalization Bound of GAL]
	Denote $\mathcal{H}$ as a finite hypothesis set. Given $v,v^{*}\in [a,b]^{d}$, for any $\delta>0$, with probability at least $1-\delta$, the following inequality holds for all $h\in \mathcal{H}$:
	\begin{align*}
	|R_{\mathcal{D}}(h) - \hat{R}_{D}(h)| \le  \sqrt[\leftroot{-3}\uproot{3}p]{d(b-a)^{p}} \cdot \sqrt{\frac{\log|\mathcal{H}|+\log\frac{2}{\delta}}{2|D|}}
	\end{align*}
	\label{thm:gb}
\end{theorem}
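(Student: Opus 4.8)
The plan is to treat this as the standard Hoeffding-plus-union-bound argument for a finite hypothesis class, following the $\ell_{p}$-loss regression setting of \cite{Mohri_MIT_2012}. The first step is to pin down the range of the per-sample loss. Since both $h(z)$ and $v^{*}=h^{*}(z)$ lie in $[a,b]^{d}$, every coordinate difference obeys $|h(z)_{j}-v^{*}_{j}|\le b-a$, so the $\ell_{p}$ loss satisfies $0\le \ell_{p}(h(z),v^{*})\le \big(\sum_{j=1}^{d}(b-a)^{p}\big)^{1/p}=\sqrt[p]{d(b-a)^{p}}$. Writing $M=\sqrt[p]{d(b-a)^{p}}$ for this upper bound, each realization of the loss lies in an interval of width $M$.

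Next I would fix a single $h\in \mathcal{H}$ and view $\hat{R}_{D}(h)=\frac{1}{|D|}\sum_{i}\ell_{p}(h(z_{i}),v^{*}_{i})$ as the empirical average of $|D|$ independent random variables, each bounded in $[0,M]$ and each with expectation $R_{\mathcal{D}}(h)$, since the $z_{i}$ are drawn i.i.d.\ from $\mathcal{D}$ and $v^{*}_{i}=h^{*}(z_{i})$. Applying the two-sided Hoeffding inequality then yields, for every $\epsilon>0$,
\[
P\big(|R_{\mathcal{D}}(h)-\hat{R}_{D}(h)|\ge \epsilon\big)\le 2\exp\Big(-\frac{2|D|\epsilon^{2}}{M^{2}}\Big).
\]

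The third step makes the estimate uniform over $\mathcal{H}$. Because $\mathcal{H}$ is finite, summing the bound above over all $h\in\mathcal{H}$ gives $P\big(\exists h:\,|R_{\mathcal{D}}(h)-\hat{R}_{D}(h)|\ge \epsilon\big)\le 2|\mathcal{H}|\exp(-2|D|\epsilon^{2}/M^{2})$. Setting the right-hand side equal to $\delta$ and solving for $\epsilon$ produces $\epsilon=M\sqrt{(\log|\mathcal{H}|+\log\frac{2}{\delta})/(2|D|)}$; substituting $M=\sqrt[p]{d(b-a)^{p}}$ recovers exactly the claimed inequality, valid simultaneously for all $h\in\mathcal{H}$ with probability at least $1-\delta$.

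I do not expect a genuine obstacle, as the skeleton is textbook; the one place demanding care is the range computation, where the factor $\sqrt[p]{d(b-a)^{p}}$ arises precisely because the $\ell_{p}$ loss aggregates $d$ coordinates under the $\ell_{p}$-norm, so one must use the norm interpretation (not a coordinatewise $p$-th power) to recover the $p$-th root and match the stated constant. I would also make explicit the implicit assumption that each empirical label satisfies $v^{*}_{i}=h^{*}(z_{i})$, since this is what guarantees that every summand is an unbiased estimate of $R_{\mathcal{D}}(h)$ and thereby licenses the identification of the mean inside Hoeffding's inequality.
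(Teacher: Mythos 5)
Your proposal is correct and follows essentially the same route as the paper's proof: bound the $\ell_{p}$ loss by $\sqrt[p]{d(b-a)^{p}}$, apply the two-sided Hoeffding inequality together with a union bound over the finite hypothesis set $\mathcal{H}$, and solve $2|\mathcal{H}|\exp\bigl(-2|D|\epsilon^{2}/M^{2}\bigr)=\delta$ for $\epsilon$. Your version is in fact slightly cleaner, since you state explicitly that each summand is an unbiased, $[0,M]$-bounded estimate of $R_{\mathcal{D}}(h)$ via $v^{*}_{i}=h^{*}(z_{i})$, a point the paper leaves implicit.
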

\begin{proof}
	The proof sketch is similar to the classification generalization bound provided in \cite{Mohri_MIT_2012}.
	First, as $\ell_{p}(v,v^{*})=(\sum_{i}^{d}|v_{i}-v^{*}_{i}|^p)^{\frac{1}{p}}\le (d(b-a)^{p})^{\frac{1}{p}}$, we know $\ell_{p}$ is bounded by $(d(b-a)^{p})^{\frac{1}{p}}$. Then, by the union bound, given an error $\xi$, we have
	\begin{align*}
	Pr[\sup_{h\in \mathcal{H}}|R(h)-\hat{R}(h)| > \xi] \le \sum_{h\in \mathcal{H}}^{} Pr[|R(h)-\hat{R}(h)|> \xi].
	\end{align*}
	By Hoeffding's bound, we have
	\begin{align*}
	\sum_{h\in \mathcal{H}}^{} Pr[|R(h)-\hat{R}(h)|> \xi] \le 2|\mathcal{H}|\exp \left(-\frac{2|D|\xi^2}{(d(b-a)^{p})^{\frac{2}{p}}}\right).
	\end{align*}
	Due to the probability definition, $2|\mathcal{H}|\exp (-\frac{2|D|\xi^2}{(d(b-a)^{p})^{\frac{2}{p}}}) = \delta$. Considering $\xi$ is a function of other variables, we can rearrange it as 
	$\xi=(d(b-a)^{p})^{\frac{1}{p}}\sqrt{\frac{\log|\mathcal{H}|+\log\frac{2}{\delta}}{2|D|}}$.
	Since we know $Pr[|R(f)-\hat{R}(f)| > \xi]$ is with probability at most $\delta$, it can be inferred that $Pr[|R(f)-\hat{R}(f)| <= \xi]$ is at least $1-\delta$.
	It completes the proof.
\end{proof}

\begin{remark}
	Theorem~\ref{thm:gb} supports the general intuition that more training data should produce better generalization, which is aligned with conventional learning problems, \eg classification and regression \cite{Mohri_MIT_2012}. Furthermore, distinct from conventional learning problems, the range of gradient adjustments and the dimension could affect the generalization bound.
\end{remark}


\begin{theorem}[Conventional Remainder Error Bound \cite{Nesterov_Springer_2013}]
	Let $f\in C^{1,1}_{L}(\mathbb{R}^{n})$ (\ie $f$ is once continuously differentiable on $\mathbb{R}^{n}$ and its first-order partial derivative is Lipschitz continuous with constant $L$). Then for any $z^{+},\ z\in \mathbb{R}^{n}$ we have
	\begin{align*}
	|f(z^{+})-f(z)-\nabla^{\top}f(z)(z^{+}-z)| \le \frac{L}{2}\|z^{+}-z\|^{2}
	\end{align*}
	\label{thm:nesterov}
\end{theorem}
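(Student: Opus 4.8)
The plan is to reduce the multivariate statement to a one-dimensional integral along the segment joining $z$ and $z^{+}$, which turns the gradient-Lipschitz hypothesis into a directly integrable bound. First I would introduce the auxiliary scalar function $\phi(\tau)=f(z+\tau(z^{+}-z))$ for $\tau\in[0,1]$, which is differentiable because $f\in C^{1,1}_{L}(\mathbb{R}^{n})$, with $\phi'(\tau)=\nabla^{\top}f(z+\tau(z^{+}-z))(z^{+}-z)$ by the chain rule. Applying the fundamental theorem of calculus then gives
\begin{align*}
f(z^{+})-f(z)=\phi(1)-\phi(0)=\int_{0}^{1}\nabla^{\top}f(z+\tau(z^{+}-z))(z^{+}-z)\,d\tau.
\end{align*}

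The key algebraic move is to subtract the linear term $\nabla^{\top}f(z)(z^{+}-z)$ \emph{inside} the integral, rewriting the quantity of interest as
\begin{align*}
f(z^{+})-f(z)-\nabla^{\top}f(z)(z^{+}-z)=\int_{0}^{1}\big[\nabla f(z+\tau(z^{+}-z))-\nabla f(z)\big]^{\top}(z^{+}-z)\,d\tau,
\end{align*}
so that the integrand now features a \emph{difference} of gradients evaluated at points that are $\tau\|z^{+}-z\|$ apart. Taking absolute values, pulling them inside the integral, and applying the Cauchy--Schwarz inequality bounds the integrand by $\|\nabla f(z+\tau(z^{+}-z))-\nabla f(z)\|\,\|z^{+}-z\|$.

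Next I would invoke the Lipschitz continuity of the gradient, namely $\|\nabla f(z+\tau(z^{+}-z))-\nabla f(z)\|\le L\tau\|z^{+}-z\|$, which is exactly the hypothesis $f\in C^{1,1}_{L}$ applied to the two arguments. Substituting this estimate yields the bound $\int_{0}^{1}L\tau\|z^{+}-z\|^{2}\,d\tau$, and evaluating the elementary integral $\int_{0}^{1}\tau\,d\tau=\tfrac{1}{2}$ produces $\tfrac{L}{2}\|z^{+}-z\|^{2}$, completing the argument.

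This result is a textbook lemma, so I do not anticipate a genuine obstacle; the only point requiring care is the order of the two subtractions. A naive bound on $\|\nabla f(z+\tau(z^{+}-z))\|$ alone leads nowhere, whereas reintroducing $\nabla f(z)$ so that the Lipschitz condition acts on the gradient \emph{difference} is precisely what makes the sharp constant $L/2$ appear rather than a weaker estimate. The minor technical checks — differentiability of $\phi$ and the validity of moving the absolute value inside the integral — are immediate consequences of $f\in C^{1,1}_{L}(\mathbb{R}^{n})$.
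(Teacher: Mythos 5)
Your proof is correct and takes essentially the same route as the paper: the paper cites this bound from Nesterov and uses precisely this argument---the integral form of the Taylor remainder, a Cauchy--Schwarz bound on the gradient-difference inner product, the Lipschitz estimate $\|\nabla f(z+\tau(z^{+}-z))-\nabla f(z)\|\le L\tau\|z^{+}-z\|$, and $\int_{0}^{1}\tau\,d\tau=\tfrac{1}{2}$---when it proves its refined variant (Theorem~\ref{thm:error}). No gaps; your closing remark that the Lipschitz condition must act on the gradient \emph{difference} rather than on $\|\nabla f\|$ itself is exactly the step that makes the constant $L/2$ come out.
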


\begin{theorem}[Revisited Remainder Error Bound]
	Let $f\in C^{1,1}_{L}(\mathbb{R}^{n})$. Given $\tau\in [0,1]$ and any $z^{+},\ z\in \mathbb{R}^{n}$, we denote the minimal angle between vectors $\nabla f(z+\tau(z^{+}-z))-\nabla f(z)$ and $z^{+}-z$ as $\gamma$ and assume the two vectors are non-zero. Then we have
	\begin{small}
	\begin{align*}
	|f(z^{+})-f(z)-\nabla^{\top}f(z)(z^{+}-z)| \le \frac{L}{2}|\cos{\gamma}|\cdot\|z^{+}-z\|^{2}
	\end{align*}
	\end{small}
	\label{thm:error}
\end{theorem}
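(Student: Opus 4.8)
The plan is to mirror the integral proof of the conventional bound in Theorem~\ref{thm:nesterov}, but to retain the angular information at the Cauchy--Schwarz step instead of discarding it. Write $w=z^{+}-z$. Since $f\in C^{1,1}_{L}(\mathbb{R}^{n})$ has a continuous gradient, I would first apply the fundamental theorem of calculus along the segment $t\mapsto z+tw$ to express the remainder as an integral,
\begin{align*}
f(z^{+})-f(z)-\nabla^{\top}f(z)w = \int_{0}^{1}\big[\nabla f(z+tw)-\nabla f(z)\big]^{\top}w\,dt ,
\end{align*}
and then pass the absolute value inside via the triangle inequality for integrals, leaving $\big|\int_{0}^{1}[\nabla f(z+tw)-\nabla f(z)]^{\top}w\,dt\big|\le \int_{0}^{1}\big|[\nabla f(z+tw)-\nabla f(z)]^{\top}w\big|\,dt$.

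Next, at each $t$ I would rewrite the scalar integrand in the geometric form of the inner product, $[\nabla f(z+tw)-\nabla f(z)]^{\top}w=\|\nabla f(z+tw)-\nabla f(z)\|\,\|w\|\cos\gamma_{t}$, where $\gamma_{t}$ is the (undirected) angle between the two vectors. The decisive step is to replace the $t$-dependent factor $|\cos\gamma_{t}|$ by the single quantity $|\cos\gamma|$ from the statement: reading $\gamma$ as the minimal angle attained over $\tau\in[0,1]$ gives $\gamma\le\gamma_{t}$ for every $t$, hence $|\cos\gamma_{t}|\le|\cos\gamma|$, so this factor can be pulled out of the integral. The residual magnitude is then controlled by Lipschitz continuity of the gradient, $\|\nabla f(z+tw)-\nabla f(z)\|\le L\|tw\|=Lt\|w\|$, exactly as in Theorem~\ref{thm:nesterov}. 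Combining these estimates and evaluating the one elementary integral $\int_{0}^{1}t\,dt=\tfrac12$ yields
\begin{align*}
|f(z^{+})-f(z)-\nabla^{\top}f(z)w| \le |\cos\gamma|\,\|w\|\!\int_{0}^{1}\! Lt\,\|w\|\,dt = \tfrac{L}{2}\,|\cos\gamma|\,\|w\|^{2},
\end{align*}
which is the claim after substituting $w=z^{+}-z$.

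The main obstacle I anticipate is the correct bookkeeping of the angle factor rather than any hard analysis. One must justify interpreting $\gamma$ as the minimal (undirected) angle along the whole segment, so that $|\cos\gamma|$ dominates every $|\cos\gamma_{t}|$; the non-degeneracy hypothesis ($\nabla f(z+\tau w)-\nabla f(z)\neq\bm{0}$ and $w\neq\bm{0}$) is precisely what keeps these angles well defined, and at the isolated $t$ where the gradient difference vanishes the integrand is zero, so the undefined angle there is harmless. I would also flag one tempting but costly misstep: collapsing the integral to a single point through the mean value theorem for integrals replaces $\int_{0}^{1}t\,dt=\tfrac12$ by a bound of the form $L\tau$ and thereby loses the factor of one half; keeping the integral intact throughout is exactly what produces the stated constant $\tfrac{L}{2}$.
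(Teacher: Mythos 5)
Your proposal is correct and follows essentially the same route as the paper's proof: the integral (fundamental theorem of calculus) form of the remainder, the triangle inequality for integrals, extraction of the cosine factor, the Lipschitz bound $\|\nabla f(z+tw)-\nabla f(z)\|\le Lt\|w\|$, and the evaluation $\int_{0}^{1}t\,dt=\tfrac12$. If anything, your bookkeeping is more careful than the paper's, which writes a single $|\cos\gamma|$ inside the integral as though the angle were $\tau$-independent, whereas you introduce the $t$-dependent undirected angle $\gamma_{t}$ explicitly and justify the step $|\cos\gamma_{t}|\le|\cos\gamma|$ via the minimal-angle reading of $\gamma$ (and also handle the degenerate points where the gradient difference vanishes).
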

\begin{proof}
	Similar to the proof of Theorem \ref{thm:nesterov} \cite{Nesterov_Springer_2013}, we use the integral form of the remainder in Taylor's expansion
	\begin{align*}
	& \  |f(z^{+})-f(z)-\nabla^{\top}f(z)(z^{+}-z)| \\
	&= |\int_{0}^{1} (\nabla f(z+\tau(z^{+}-z))-\nabla f(z))^{\top}(z^{+}-z) d\tau| \\
	&\le \int_{0}^{1} |(\nabla f(z+\tau(z^{+}-z))-\nabla f(z))^{\top}(z^{+}-z)| d\tau \\
	&= \int_{0}^{1} |\cos{\gamma}|\cdot\|\nabla f(z+\tau(z^{+}-z))-\nabla f(z)\| \cdot \|z^{+}-z\|| d\tau \\
	&\le |\cos{\gamma}|\int_{0}^{1} \|\nabla f(z+\tau(z^{+}-z))-\nabla f(z)\| \cdot \|z^{+}-z\| d\tau \\
	&\le L\cdot|\cos{\gamma}| \cdot \|z^{+}-z\|^{2} \int_{0}^{1} \tau d\tau \\
	&= \frac{L}{2}|\cos{\gamma}|\cdot\|z^{+}-z\|^{2}
	\end{align*}
	It completes the proof.
\end{proof}


\begin{remark}
	Theorem~\ref{thm:error} shows a tighter error bound of the remainder than the well-known bound in Theorem~\ref{thm:nesterov} \cite{Nesterov_Springer_2013}. It justifies why properly adjusting gradients direction leads to an effective descent. This is a new insight, as compared to Theorem~\ref{thm:nesterov}. Moreover, it indicates the optimal condition from a geometric perspective, that is, if $z^{+}-z$ is perpendicular to $\nabla f(z+\tau(z^{+}-z))-\nabla f(z)$,  the remainder error bound is zero. This is feasible as $z^{+}-z$ is liable to be small in terms of magnitude and $\nabla f(z+\tau(z^{+}-z))-\nabla f(z)$ will not vary dramatically with $\tau\in [0,1]$.
	In addition, the theorem provides some guideline to design \eqn \ref{eqn:nml}, which forces the adjustment module to find a direction instead of a vector itself for stability.
\end{remark}


\subsection{Adaptivity to Optimization Methods}

To illustrate the effectiveness of the proposed GAL on the optimization process, we employ the 3D problem used in \cite{Luo_TPAMI_2019}, \ie~$z=f(x,y)$, where $x,y,z\in \mathbb{R}$, to visualize the convergence path w.r.t. various optimizers.
\figref{fig:illustration} show the convergence paths (\ie top row) and the corresponding curves of $z$ against steps (\ie bottom row).
Specifically, the blue paths/curves are produced by the standard process, while the red ones are produced by the proposed GAL.
Given the same starting point, the convergence is affected by the problem and optimizers.
The proposed GAL observes the completed convergence steps to learn to adjust the gradients.
The resulting convergence curves show that it finds shortcuts to reach the local minimum efficiently.
Furthermore, \figref{fig:illustration} verifies that the proposed GAL is general in nature and can work with various optimizers.
\section{Experiments}
\label{sec:exp}

We comprehensively evaluate the proposed GAL with various models and optimizers. 
Specifically, we conduct experiment on the image classification task~\cite{Luo_ICLR_2018,Zhang_NIPS_2019}, the object detection task~\cite{Carion_ECCV_2020}, and the regression task \cite{Harrison_JEEM_1978,Dua_Report_2019,Pace_SPL_1997}.

\subsection{Datasets}
Following the experimental protocol in \cite{Luo_ICLR_2018,Zhang_NIPS_2019}, we use CIFAR-10/100 \cite{Krizhevsky_TR_2009} and ImageNet \cite{Deng_CVPR_2009} for evaluation on the image classification task. 
Specifically, CIFAR-10 (CIFAR-100) consists of 50,000 32$\times$32 images with 10 (100) classes, while ImageNet has 1000 visual concepts (\ie classes) and provides average 1000 real-world images on each class.
For object detection experiments, we follow the experimental protocol in \cite{Carion_ECCV_2020} to use COCO 2017 \cite{Lin_ECCV_2014} for evaluation.
MS COCO is a large-scale object detection benchmark dataset that consists of 82,783 training images and 40,504 validation images with 80 object categories.
Moreover, three datasets, \ie~Boston housing \cite{Harrison_JEEM_1978}, diabetes \cite{Dua_Report_2019}, and California housing \cite{Pace_SPL_1997}, are used for the regression task. Specifically, Boston housing includes 506 entries and each entry has 14 features, diabetes consists of 442 samples that have 10 features, and California housing has 20640 samples and each sample has 8 features.

\begin{table}[!t]
	\centering
	\caption{\label{tbl:cifar10}
	    Image classification performance on CIFAR-10. The average error and its standard deviation are over three runs. Architecture (100-32-16) is used for GAL and the number of parameters of GAL is 5K.
	    }
	\adjustbox{width=1.0\columnwidth}{
	\begin{tabular}{L{43ex} C{12ex}}
		\toprule
		Model (optimizer)                                & Error (\%)      \\
		\cmidrule(lr){1-1} \cmidrule(lr){2-2}
		PreResNet-110 (Lookahead)~\cite{Zhang_NIPS_2019} & 4.73          \\ 
		DenseNet-121 (Adabound)~\cite{Luo_ICLR_2018}     & 5.00          \\ 
		EfficientNet B0 (-)~\cite{Tan_ICML_2019}         & 1.90          \\
		EfficientNet B1 (SGD)~\cite{Luo_TPAMI_2019}      & 1.91          \\ \midrule
		EffcientNet B1 (SGD) reproduced                  & 1.92$\pm$0.12 \\ 
		EffcientNet B1 (SGD) GAL                         & \textbf{1.84}$\pm$0.06 \\ \midrule
        EffcientNet B1 (Lookahead) reproduced            & 2.01$\pm$0.02 \\ 
		EffcientNet B1 (Lookahead) GAL                   & 1.91$\pm$0.02 \\ \midrule
		EffcientNet B1 (Adabound) reproduced             & 3.15$\pm$0.03 \\ 
		EffcientNet B1 (Adabound) GAL                    & 3.03$\pm$0.01 \\
		\bottomrule	
	\end{tabular}}
\end{table}


\subsection{Models \& Training Scheme}
In the image classification task, we adopt the state-of-the-art EfficientNet~\cite{Tan_ICML_2019} on CIFAR, and ResNet \cite{He_CVPR_2016} and EfficientNet on ImageNet.
Originally, EfficientNet is trained on Cloud TPU for 350 epochs with batch size of 2048\footnote[1]{https://rb.gy/rz0tus} \cite{Tan_ICML_2019}.
Due to the limitation of computation resources, we follow the training scheme in \cite{Luo_TPAMI_2019} to train EfficientNet models on CIFAR. 
Similarly, we employ a publicly available implementation\footnote[2]{https://github.com/rwightman/pytorch-image-models} to train ResNet and EfficientNet on ImageNet with 8 NVIDIA V100 GPUs with batch size of 320. 
We train the models for 90 epochs \cite{He_CVPR_2016,Zhang_NIPS_2019} to provide comparable results.
In the object detection task,
DEtection TRansformer (DETR) is originally trained with 16 NVIDIA V100 GPUs for 500 epochs \cite{Carion_ECCV_2020}. 
Due to the limitation of computation resources, we follow DETR's suggestion\footnote[3]{https://github.com/facebookresearch/detr} to train the model with 4 NVIDIA 2080 Ti GPUs for 150 epochs. 
We use the same hyperparameters as in \cite{Carion_ECCV_2020}. 
Regarding the optimization methods, the model is trained on CIFAR with SGD, Lookahead~\cite{Zhang_NIPS_2019}, and Adabound~\cite{Luo_ICLR_2018}.
Following \cite{Zhang_NIPS_2019}, Lookahead is wrapped around SGD in the experiments.
The models are trained with RMSProp~\cite{Hinton_RMSProp_2012} on ImageNet.
DETR is trained with AdamW \cite{Loshchilov_ICLR_2018} on MS COCO.
The regression experiments run on CPUs with Adam \cite{Kingma_arXiv_2014}.

\begin{table}[!t]
	\centering
	\caption{\label{tbl:cifar100}
	    Image classification performance on CIFAR-100. The average error and its standard deviation are over three runs. Architecture (256-64-32) is used for GAL and the number of parameters of GAL is 47K.
	    }
	\adjustbox{width=1.0\columnwidth}{
	\begin{tabular}{L{43ex} C{12ex}}
		\toprule
		Model (optimizer)                                & Error (\%)      \\
		\cmidrule(lr){1-1} \cmidrule(lr){2-2}
		PreResNet-110 (Lookahead)~\cite{Zhang_NIPS_2019} & 21.63          \\ 
		DenseNet-121 (Adabound)~\cite{Luo_ICLR_2018}     & -              \\ 
		EfficientNet B0 (-)~\cite{Tan_ICML_2019}         & 11.90          \\
		EfficientNet B1 (SGD)~\cite{Luo_TPAMI_2019}      & 11.81          \\ \midrule
		EffcientNet B1 (SGD) reproduced                  & 11.81$\pm$0.10 \\ 
		EffcientNet B1 (SGD) GAL                         & \textbf{11.37}$\pm$0.10 \\ \midrule
        EffcientNet B1 (Lookahead) reproduced            & 11.70$\pm$0.01 \\ 
		EffcientNet B1 (Lookahead) GAL                   & 11.44$\pm$0.02 \\ \midrule
		EffcientNet B1 (Adabound) reproduced             & 14.44$\pm$0.06 \\ 
		EffcientNet B1 (Adabound) GAL                    & 14.12$\pm$0.06 \\
		\bottomrule	
	\end{tabular}}
\end{table}

\begin{table*}[!t]
	\centering
	\caption{\label{tbl:imgnet}
	    Image classification performance on ImageNet. The average accuracy and its standard deviation are over three runs.
	   Arch (512-128) and (512-256) are used for GAL with ResNet and EfficientNet, respectively.
	    We use 90 epochs in model training for a fair comparison~\cite{He_CVPR_2016,Zhang_NIPS_2019}.
	    }
	\adjustbox{width=0.9\textwidth}{
	\begin{tabular}{L{45ex} c L{20ex} C{15ex} C{15ex}}
		\toprule
		Model (optimizer) & & \# of parameters & Top-1 & Top-5 \\
		\cmidrule(lr){1-1} \cmidrule(lr){3-3} \cmidrule(lr){4-4} \cmidrule(lr){5-5} 
		ResNet-50 (SGD) \cite{He_CVPR_2016}          & & ~~23M & 76.15 & 92.87 \\ 
		ResNet-50 (Lookahead) \cite{Zhang_NIPS_2019} & & ~~23M & 75.49 & 92.53 \\ 
		EfficientNet-B2 (RMSProp) 350 epochs \cite{Tan_ICML_2019} & & ~~9.2M & 80.30 & 95.00 \\ \midrule
		ResNet-50 (RMSProp) reproduced       & & 23.5M          & 76.43$\pm$0.02          & 93.05$\pm$0.04 \\
		ResNet-50 (RMSProp) GAL              & & 23.5M + 0.70M  & 76.53$\pm$0.03          & 93.13$\pm$0.05 \\ \midrule
		EfficientNet-B2 (RMSProp) reproduced & & ~~9.2M         & 77.93$\pm$0.09 & 93.92$\pm$0.03 \\
		EfficientNet-B2 (RMSProp) GAL        & & ~~9.2M + 0.90M & \textbf{78.10}$\pm$0.06 & \textbf{93.94}$\pm$0.06 \\
		\bottomrule	
	\end{tabular}}
\end{table*}

For the proposed GAL, we employ the MLP, which is simpler than CNN and RNN, throughout this work. 
GAL takes the feature $z \in \mathbb{R}^{d}$ as input and yields the same dimension output for gradient adjustment.
For simplicity, we denote a (N+1)-layer MLP as ({\small $\#_1\!-\!\#_2\!-\cdots-\!\#_N$}).
For example, (100-32-16) indicates that the architecture consists of four linear transformations that have affine matrices in $\mathbb{R}^{10}\times\mathbb{R}^{100}$, $\mathbb{R}^{100}\times\mathbb{R}^{32}$, $\mathbb{R}^{32}\times\mathbb{R}^{16}$, and $\mathbb{R}^{16}\times\mathbb{R}^{10}$. 
We use architectures (100-32-16) on CIFAR-10, (256-64-32) on CIFAR-100, and (512-128/256) on ImageNet. 
Regarding $(\alpha, \beta)$, we use (0.001, 1), (0.01, 1), and (0.01, 10) with SGD, Lookahead, and Adabound, respectively, on CIFAR-10; 
(0.01, 1), (0.001, 5), and (0.01, 10) with SGD, Lookahead, and Adabound, respectively, on CIFAR-100; 
and (0.001, 0.001) on ImageNet, respectively.
For the object detection tasks, we minimize the remainder w.r.t. predicted bounding box features, \ie four floats indicating a box. Correspondingly, we use (64-16), 0.01 and 1 as the arch, $\alpha$ and $\beta$, respectively.
For the regression task, the architectures of the regression models are (100-50), (64-32), and (256-64) on Boston housing, diabetes, and California housing, respectively.
The architectures of the proposed gradient adjustment modules are (16-4), (128-4), and (128-2) on Boston housing, diabetes, and California housing, respectively.
We fix $\alpha=0.001$ and $\beta=0.001$ on all three datasets.

\begin{table*}[!t]
	\centering
	\caption{\label{tbl:mscoco}
	    Object detection performance on MS COCO validation with Faster R-CNN.
	    We follow DETR's suggestion to use 150 epochs in model training \cite{Carion_ECCV_2020}\protect\footnote[3]{}. 
	This setting takes approximate 9 days for training DETR-ResNet-50 on a 4-GPU server.
	}
	\adjustbox{width=1.0\linewidth}{
	\begin{tabular}{L{30ex} C{7ex} L{18ex} C{7ex} C{7ex} C{7ex} C{7ex} C{7ex} C{7ex}}
		\toprule
		Model & Epochs & \# of parameters & AP & AP\textsubscript{50} & AP\textsubscript{75} & AP\textsubscript{S} & AP\textsubscript{M} & AP\textsubscript{L} \\
		\cmidrule(lr){1-1} \cmidrule(lr){2-2} \cmidrule(lr){3-3} \cmidrule(lr){4-4} \cmidrule(lr){5-5} \cmidrule(lr){6-6} \cmidrule(lr){7-7} \cmidrule(lr){8-8} \cmidrule(lr){9-9}
		DETR-ResNet-50 \cite{Carion_ECCV_2020} & 500 & 41M & 42.00 & 62.40 &  44.20 & 20.50 & 45.80 & 61.10 \\ 
		DETR-ResNet-101 & 500 & 60M & 43.50 & 63.80 & 46.40 & 21.90 & 48.00 & 61.80 \\ 
		\midrule
		DETR-ResNet-50 reproduced & 150 & 41M & 39.13 & 60.03 & 40.94 & 18.30 & 42.51 & 58.62 \\ 
		DETR-ResNet-50 GAL & 150 & 41M+1344 & 39.61 & 60.55 & 41.62 & 18.42 & 42.52 & 59.02 \\ 
		DETR-ResNet-101 reproduced & 150 & 60M & 40.98 & 61.91 & 43.59 & 19.32 & 45.09 & 60.25 \\
		DETR-ResNet-101 GAL & 150 & 60M+1344 & 41.38 & 62.24 & 43.87 & 20.13 & 45.10 & 60.86 \\ 
		\bottomrule	
	\end{tabular}}
\end{table*}

\begin{table*}[!t]
    \centering
    \caption{Regression performance on the Boston housing \cite{Harrison_JEEM_1978}, diabetes \cite{Dua_Report_2019}, and California housing \cite{Pace_SPL_1997} dataset. 
    $\uparrow$ (resp. $\downarrow$) indicates that a larger (resp. smaller) score suggests better performance. The experiments are run 5 times with different random seeds. We also include the analysis of two-sample t-test on the performance of the baseline and the performance of the proposed method to measure the improvement. $t_{stat}$ and $p$ are t-statistics and p value of the t-test, respectively.}
    \adjustbox{width=0.9\textwidth}{
\textcolor{black}{
	\begin{tabular}{L{17ex} L{10ex} C{20ex} C{20ex} C{27ex}}
		\toprule
		Dataset & Method & Mean Absolute Error (MAE)$\downarrow$ & Mean Squared Error (MSE)$\downarrow$ & Coefficient of Determination ($R^{2})\uparrow$ \\
		\cmidrule(lr){1-1} \cmidrule(lr){2-2} \cmidrule(lr){3-3} \cmidrule(lr){4-4} \cmidrule(lr){5-5}
		\multirow{3}{*}{Boston housing} & Baseline & 3.9535$\pm$0.4307  & 23.0956$\pm$4.1695  & 0.7668$\pm$0.0420  \\
		 & Proposed & \textbf{2.8079}$\pm$0.2720 & \textbf{12.8808}$\pm$2.0446 & \textbf{0.8699}$\pm$0.0206  \\ \cmidrule(lr){2-2} \cmidrule(lr){3-3} \cmidrule(lr){4-4} \cmidrule(lr){5-5}
		  & $(t_{stat}, p)$ & (5.02, 1.02e-03) & (4.91, 1.17e-03) & (-4.92, 1.16e-03) \\ \midrule
		\multirow{3}{*}{Diabetes} & Baseline & 44.3832$\pm$0.7752 & 3226.0238$\pm$38.8293 & 0.3821$\pm$0.0074  \\
		 & Proposed & \textbf{41.6186}$\pm$0.2989  & \textbf{2961.5520}$\pm$29.4521  & \textbf{0.4327}$\pm$0.0056  \\ \cmidrule(lr){2-2} \cmidrule(lr){3-3} \cmidrule(lr){4-4} \cmidrule(lr){5-5}
		   & $(t_{stat}, p)$ & (7.43, 7.34e-05) & (12.13, 1.97e-06) & (-12.14, 1.95e-06) \\ \midrule
		\multirow{3}{*}{California housing} & Baseline & 1.0910$\pm$0.1297 & 2.1168$\pm$0.3629 & -0.6084$\pm$0.2757  \\
		 & Proposed & \textbf{0.7780}$\pm$0.0262 & \textbf{1.1635}$\pm$0.0655 & \textbf{0.1158}$\pm$0.0498  \\ \cmidrule(lr){2-2} \cmidrule(lr){3-3} \cmidrule(lr){4-4} \cmidrule(lr){5-5}
		   & $(t_{stat}, p)$ & (5.28, 7.43e-04) & (5.77, 4.15e-04) & (-5.78, 4.14e-04) \\ 
		\bottomrule	
	\end{tabular}
}
}
    \label{tab:regr}
\end{table*}

\subsection{Performance}
\label{sec:perf}

Experimental results on CIFAR-10/100, ImageNet, and MS COCO are reported in \ref{tbl:cifar10}, \ref{tbl:cifar100}, \tabref{tbl:imgnet}, and \ref{tbl:mscoco}, respectively.
As shown in \tabref{tbl:cifar10} and \ref{tbl:cifar100}, the proposed GAL is able to work with various optimization methods, \ie SGD, Lookahead, and Adabound, to improve the performance.
Also, \tabref{tbl:imgnet} shows that it is able to work with different models and provides a performance gain.
The consistent improvement in object detection can be observed in \tabref{tbl:mscoco} on MS COCO.
Overall, the proposed GAL improves the convergence of the training process to achieve better accuracies than the standard process with various models on both tasks, which is aligned with the implication of Theorem~\ref{thm:error}. 
To further evaluate the proposed method, we apply it to the regression task. Specifically, the proposed method is applied on three regression datasets, \ie Boston housing \cite{Harrison_JEEM_1978}, diabetes \cite{Dua_Report_2019}, and California housing \cite{Pace_SPL_1997}. Three widely-used metrics, \ie mean absolute error (MAE), mean squared error (MSE), and coefficient of determination ($R^2$), are used to evaluate the performance. $R^2$ measures the accuracy and efficiency of a model on the data and is a popular metric for regression. A larger $R^2$ score indicates better performance in the regression task, while smaller MAE or MSE scores indicate better performance.
Experimental results are reported in \tabref{tab:regr}. The proposed method improves the performance on all three metrics.
To further understand the statistical significance of efficacy of the proposed method, we perform a two-sample t-test on the results of the baseline and the ones of the proposed method. 
According to the $p$ values, the results yielded by the proposed method are statistically significantly from the ones yielded by the baseline with a significance level lower than 0.05.

\begin{figure*}[!t]
	\centering
	\subfloat{%
		\includegraphics[width=0.25\linewidth]{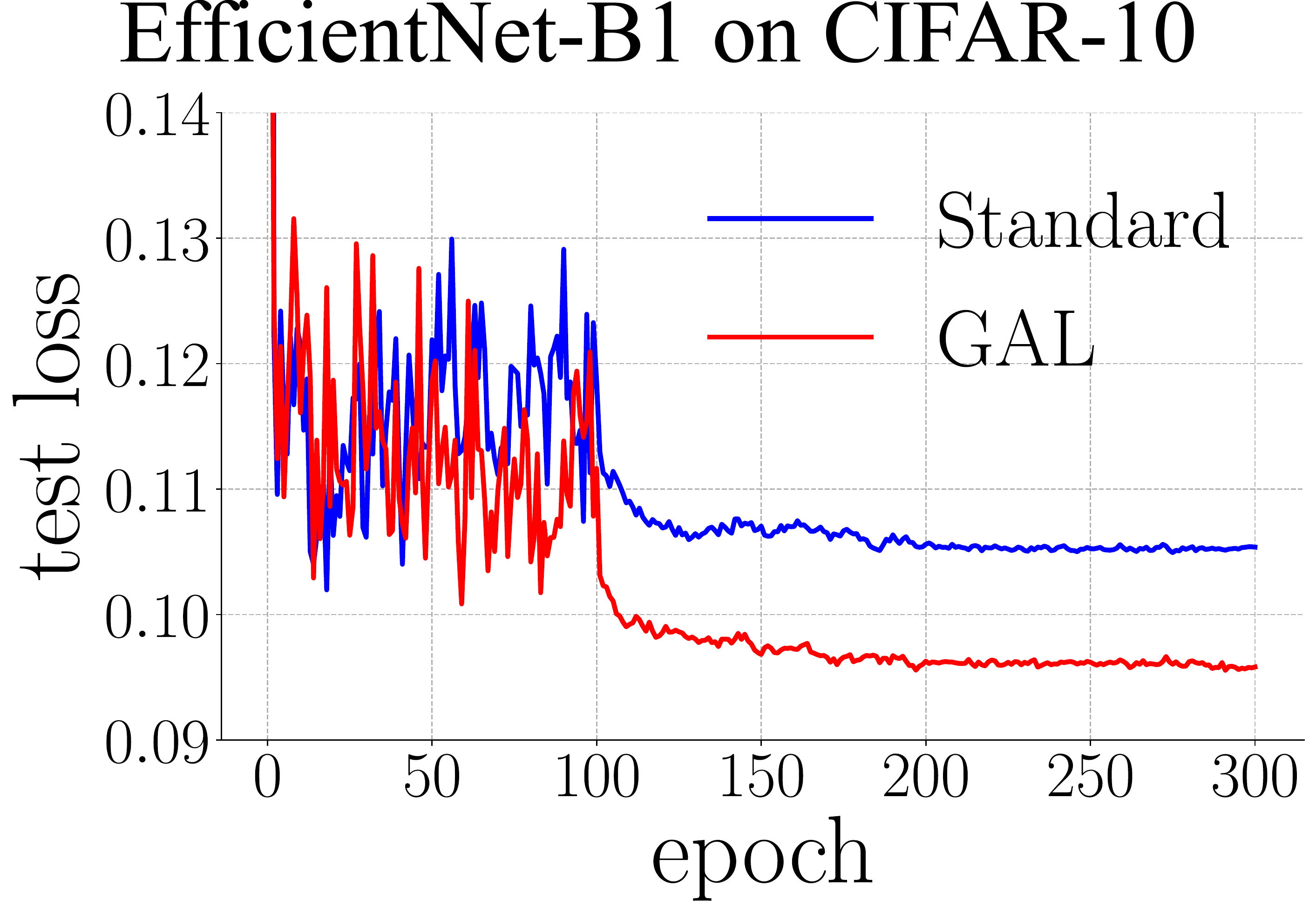}%
		\label{fig:loss_cifar10}%
	}%
	\subfloat{%
		\includegraphics[width=0.25\linewidth]{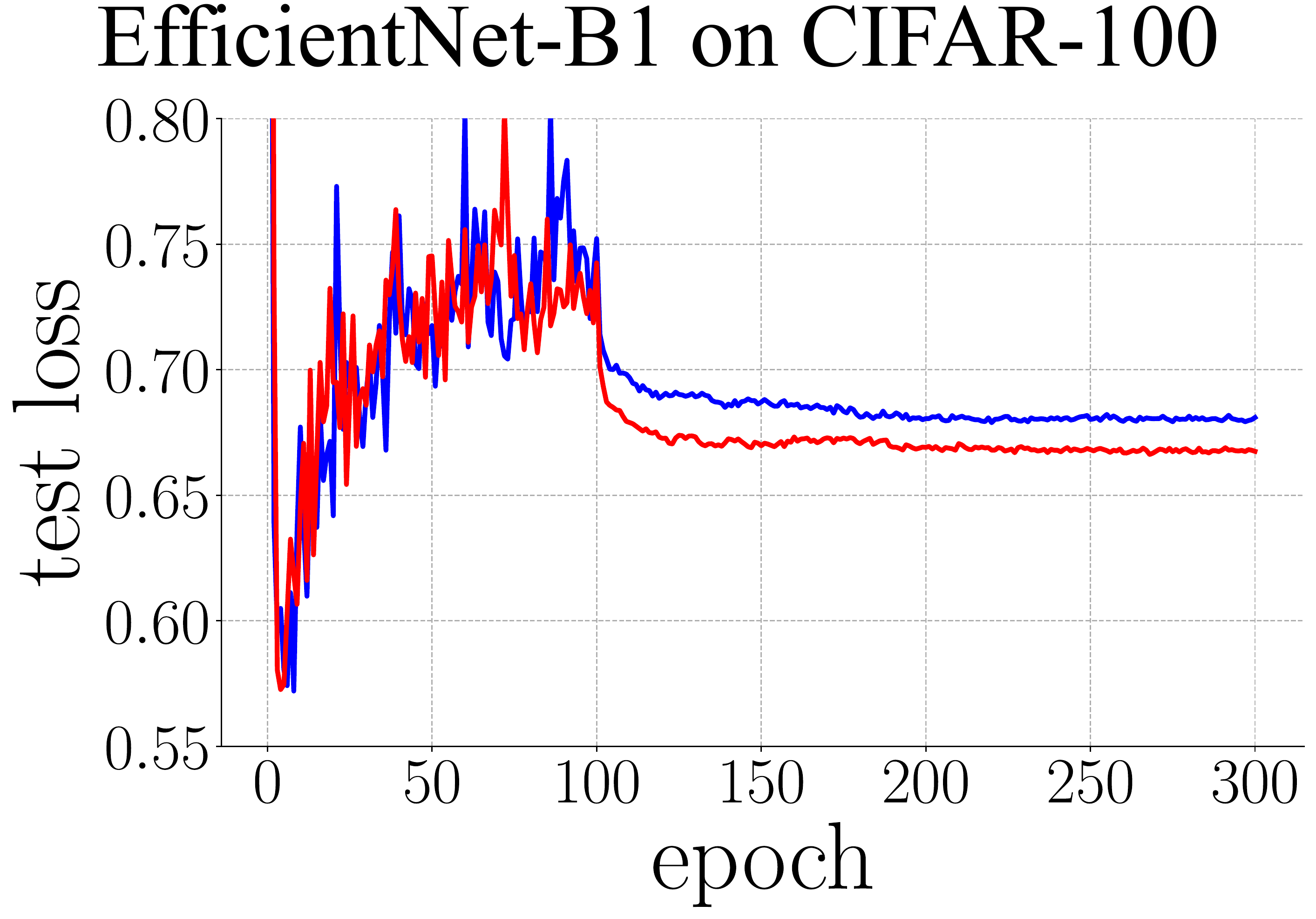}%
		\label{fig:loss_cifar100}%
	}%
	\subfloat{%
		\includegraphics[width=0.25\linewidth]{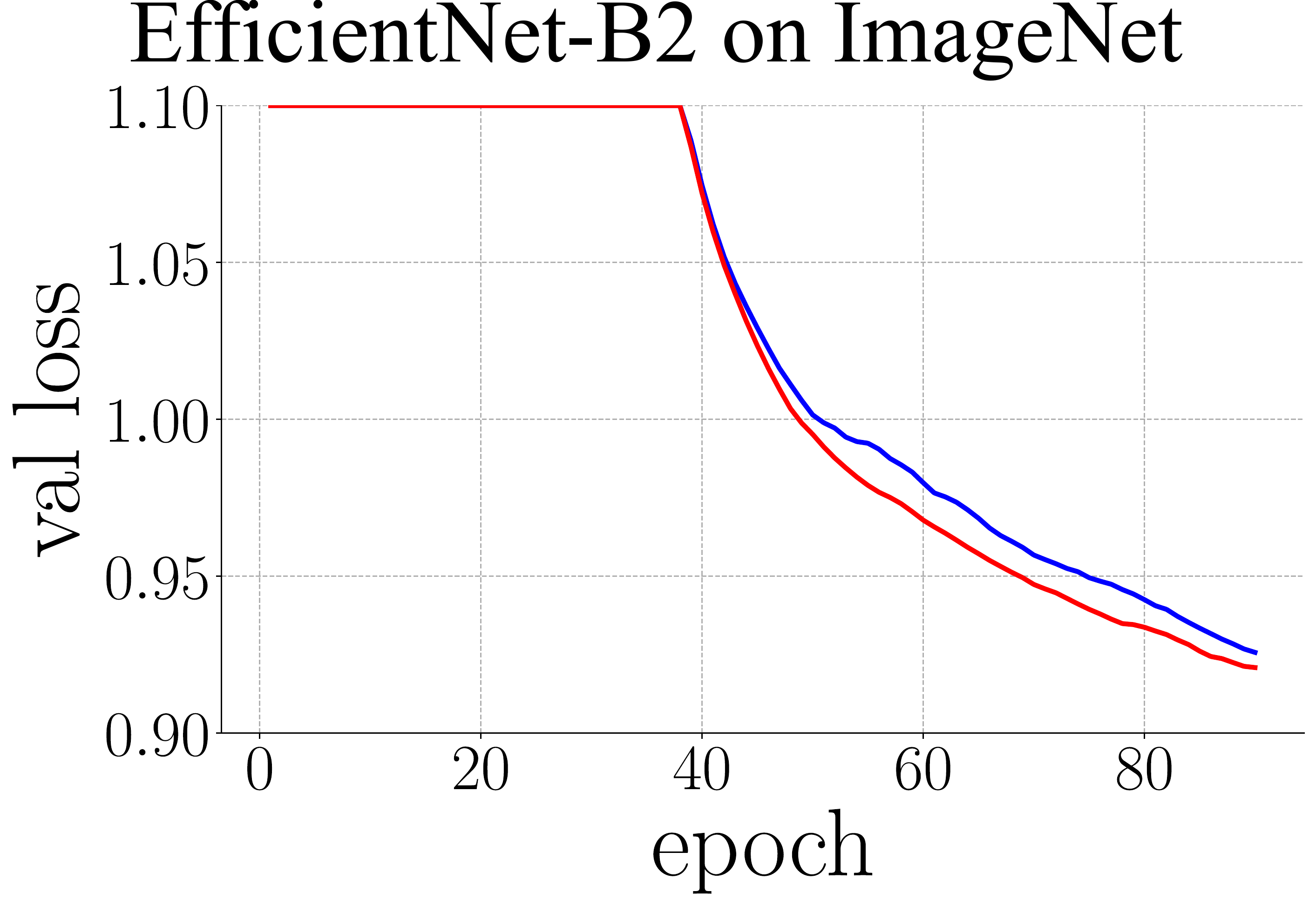}%
		\label{fig:loss_imgnet}%
	}%
	\subfloat{%
		\includegraphics[width=0.25\linewidth]{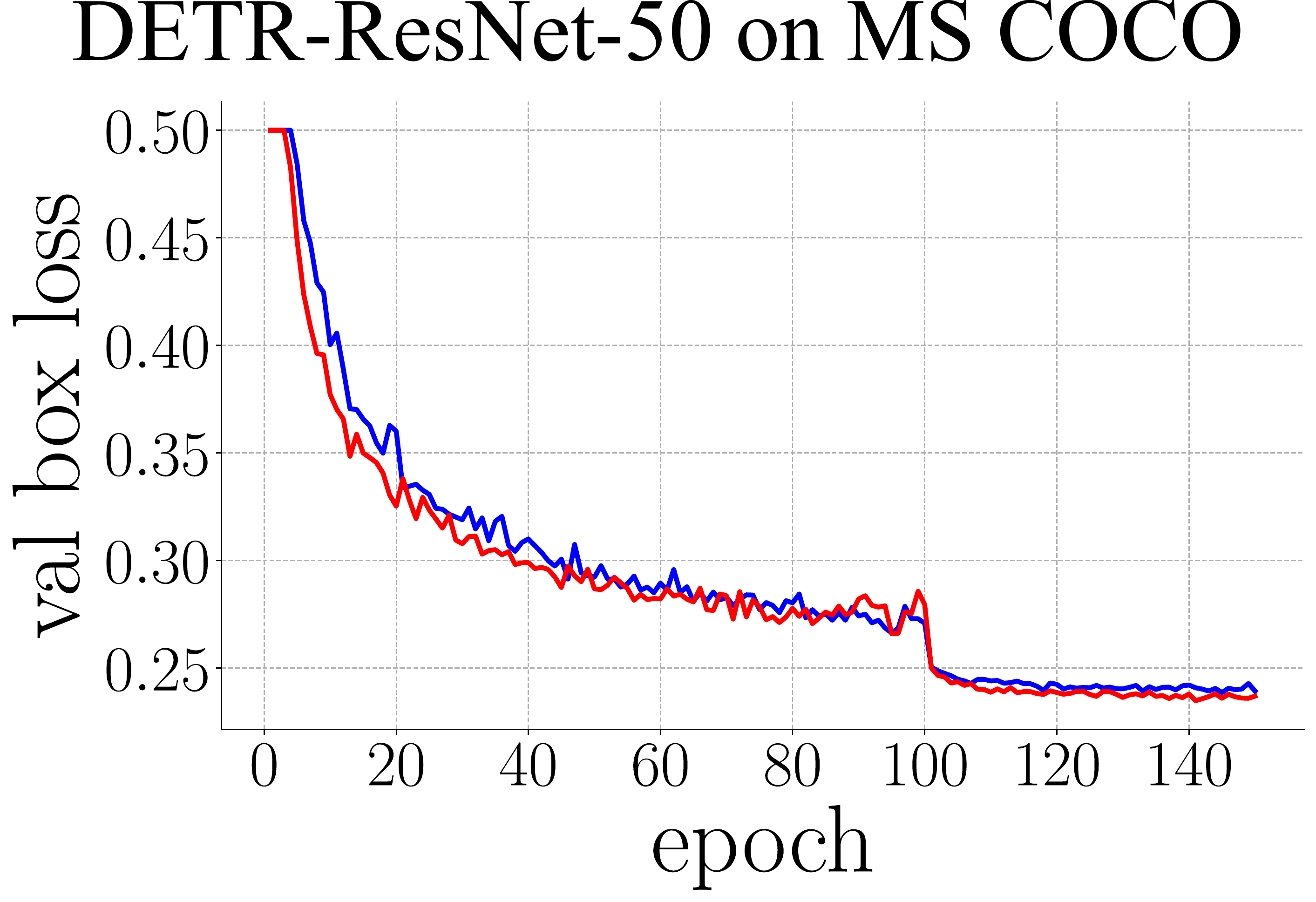}%
		\label{fig:loss_ptb}%
	}
	\caption{
	Validation/test loss curves on various datasets.
	}\label{fig:loss}
\end{figure*}

\section{Analysis}

\subsection{Generalization Ability and Approximation Remainder}
\label{sec:generalization}
To check the generalization ability of the models trained with GAL, we plot the loss curves on all validation (or test) set in \figref{fig:loss}. 
The losses of the models learned with adjusted gradients are lower than that of the models using vanilla gradients. 
This implies that the adjusted gradients are better than the vanilla gradients in terms of the generalizability.

\figref{fig:remainder} shows the corresponding remainder computed by \eqn~(\ref{eqn:loss_remainder}) and the cosine similarities between vanilla gradients and adjustment vectors on ImageNet. 
Positive similarities implies that the direction of adjustment vectors has overall smaller angle with vanilla gradient (\ie~smaller than 90$^{\circ}$).
Overall, the proposed adjusted gradients converge to the local minimum more efficiently than the vanilla gradients on all datasets.
Note that there is a warm-up in ImageNet training which cause a series of fluctuations at the early epochs, but it stabalizes after 20th epoch.

\begin{figure}[!t]
	\centering
	\subfloat{%
		\includegraphics[width=0.5\linewidth]{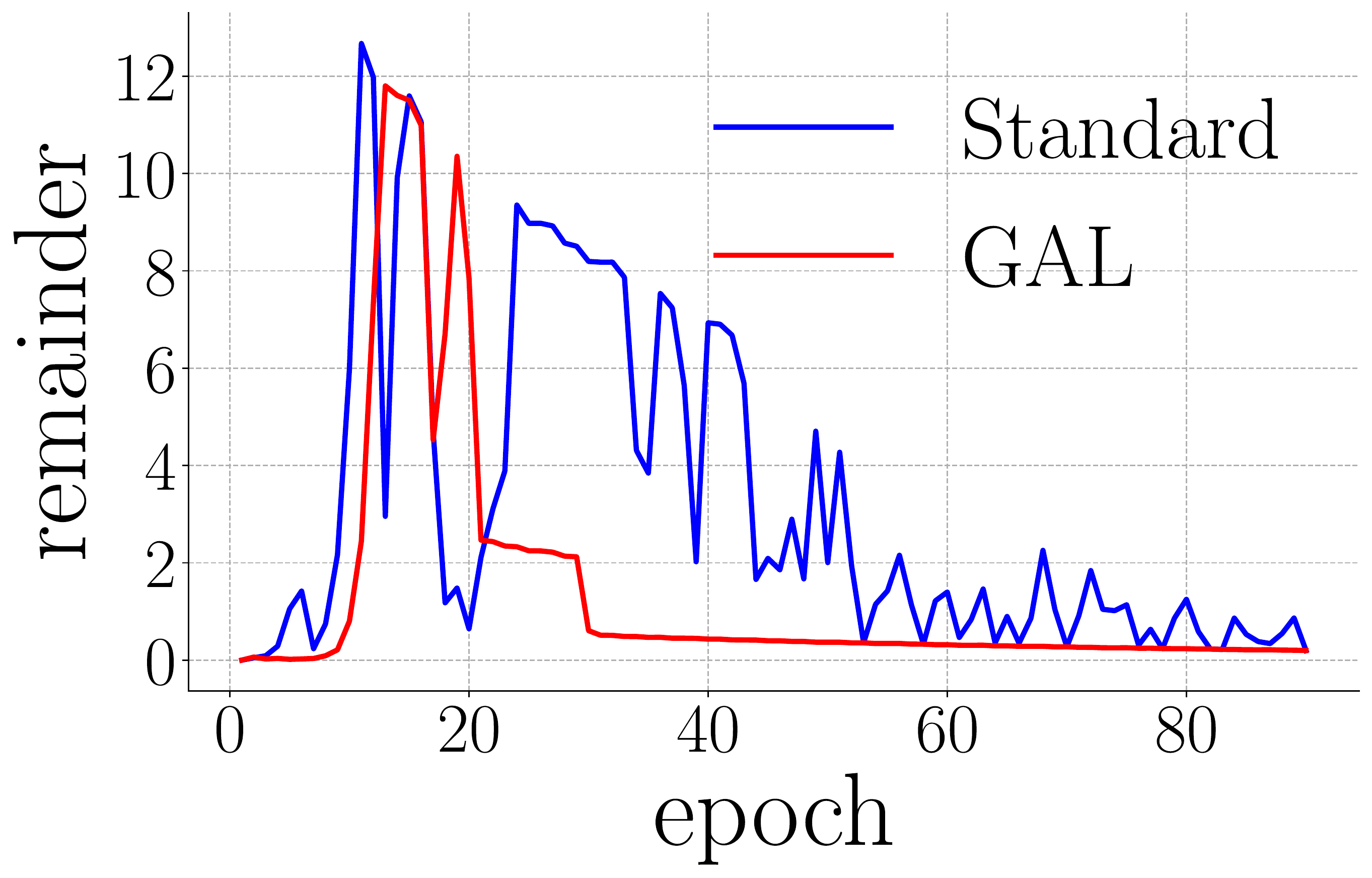}%
		\label{fig:remainder_imgnet}%
	}%
	\subfloat{%
		\includegraphics[width=0.5\linewidth]{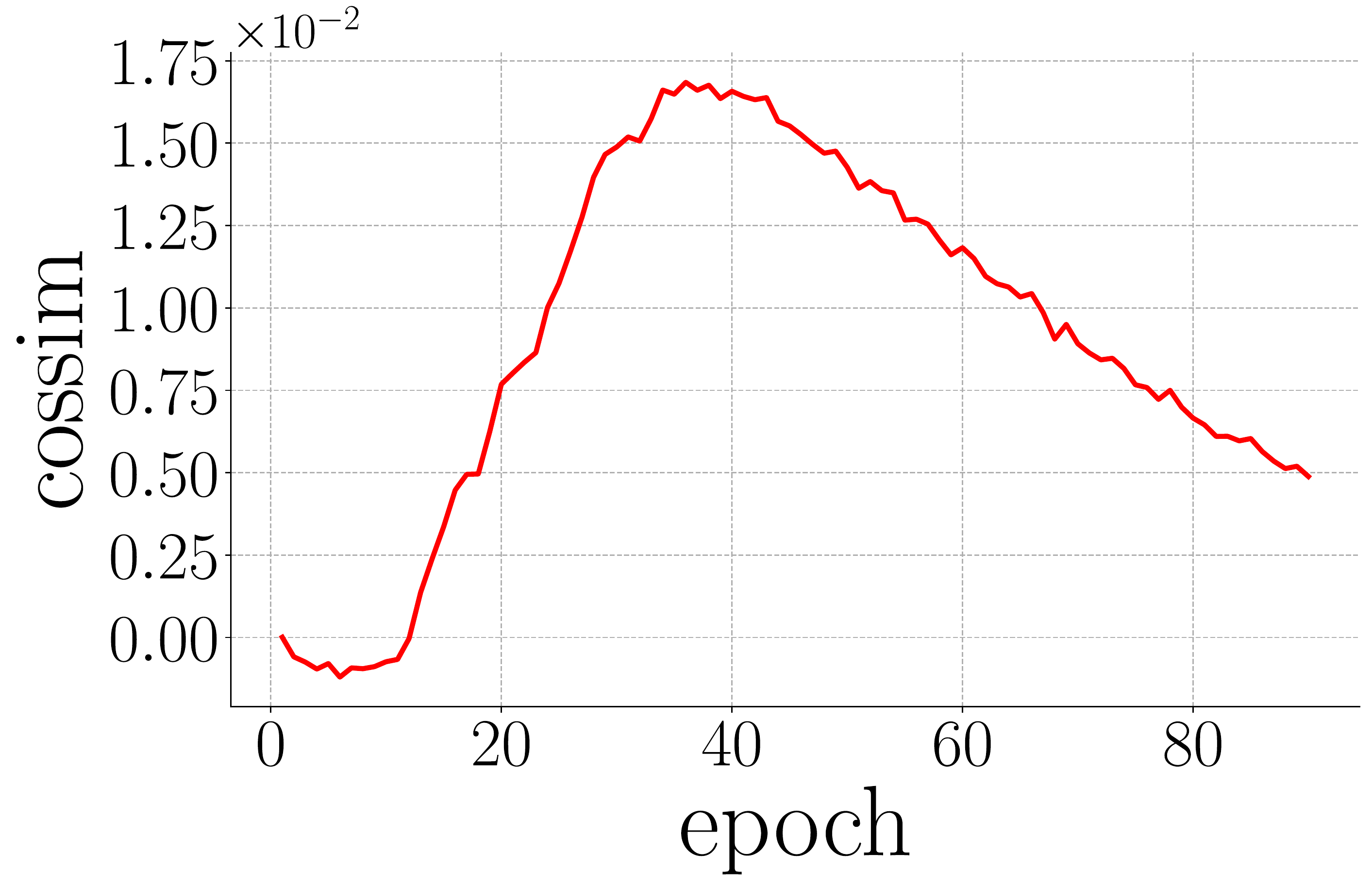}%
		\label{fig:remainder_ptb}%
	}
	\caption{
	Remainder curve (left) and cosine similarity curve (right) on ImageNet with EfficientNet-B2. 
	}\label{fig:remainder}
\end{figure}

\begin{figure*}[!t]
    \centering
	        \includegraphics[width=0.30\textwidth]{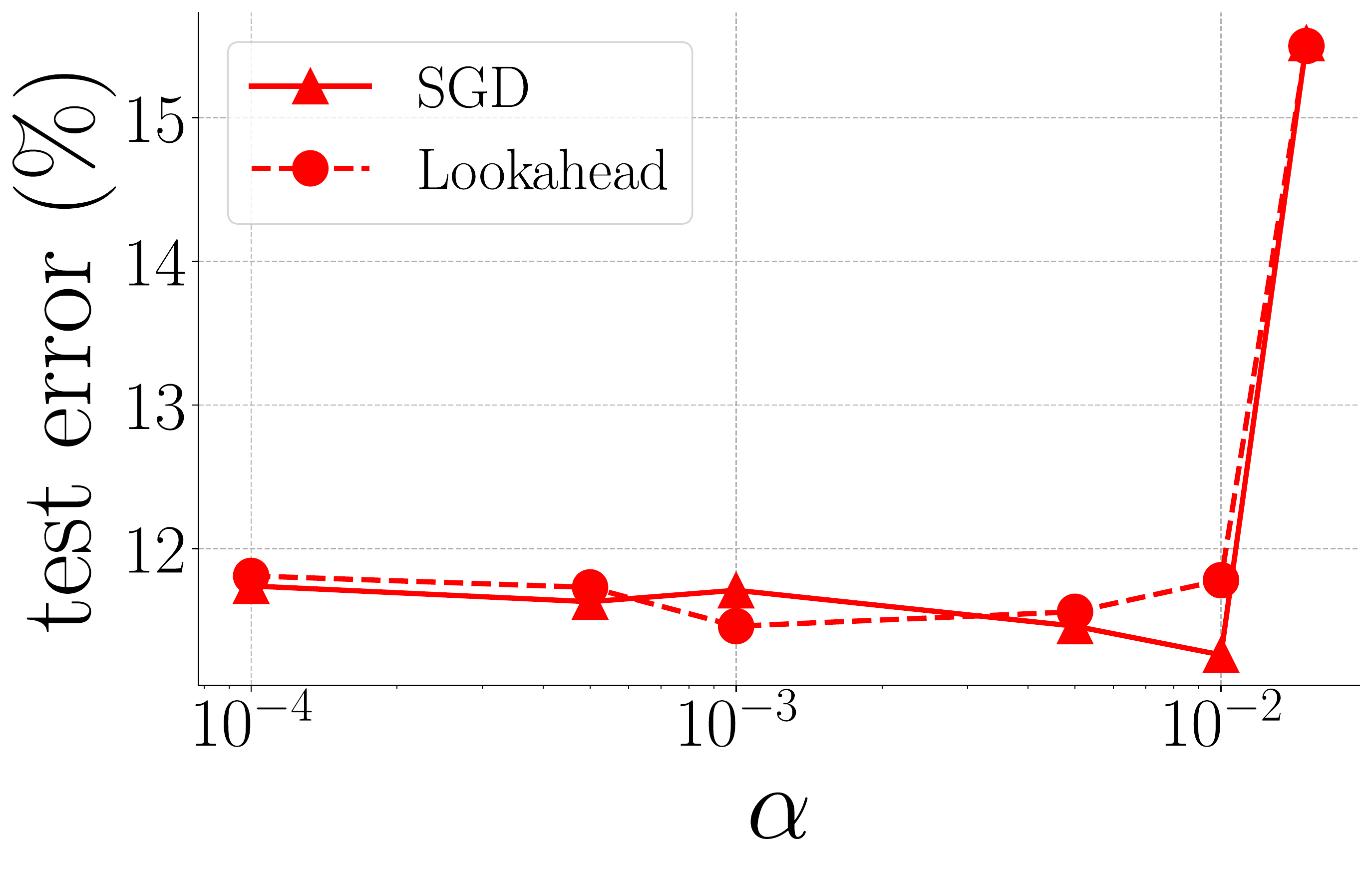} 
        	\hfill
        	\includegraphics[width=0.30\textwidth]{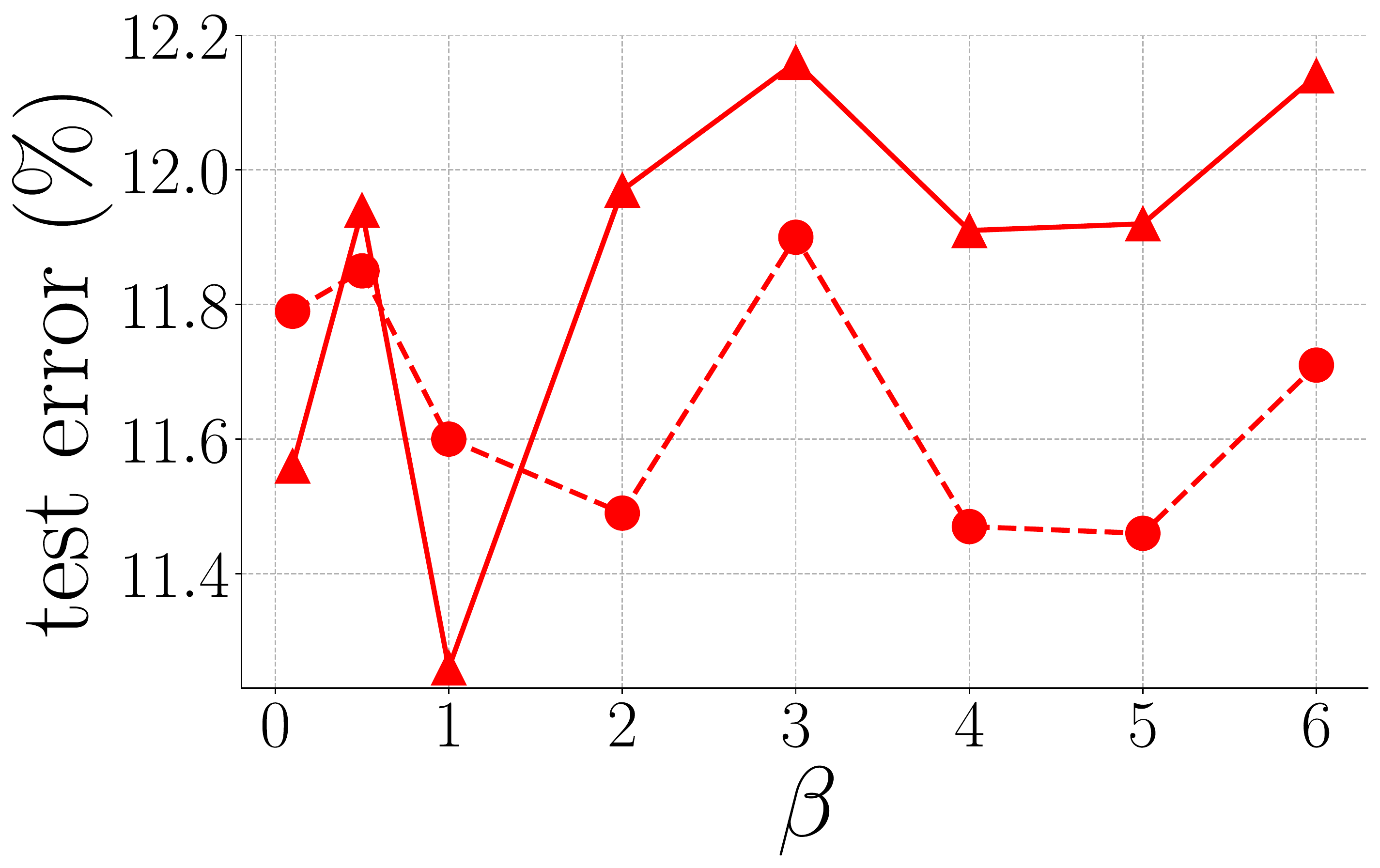}
        	\hfill
        	\includegraphics[width=0.30\textwidth]{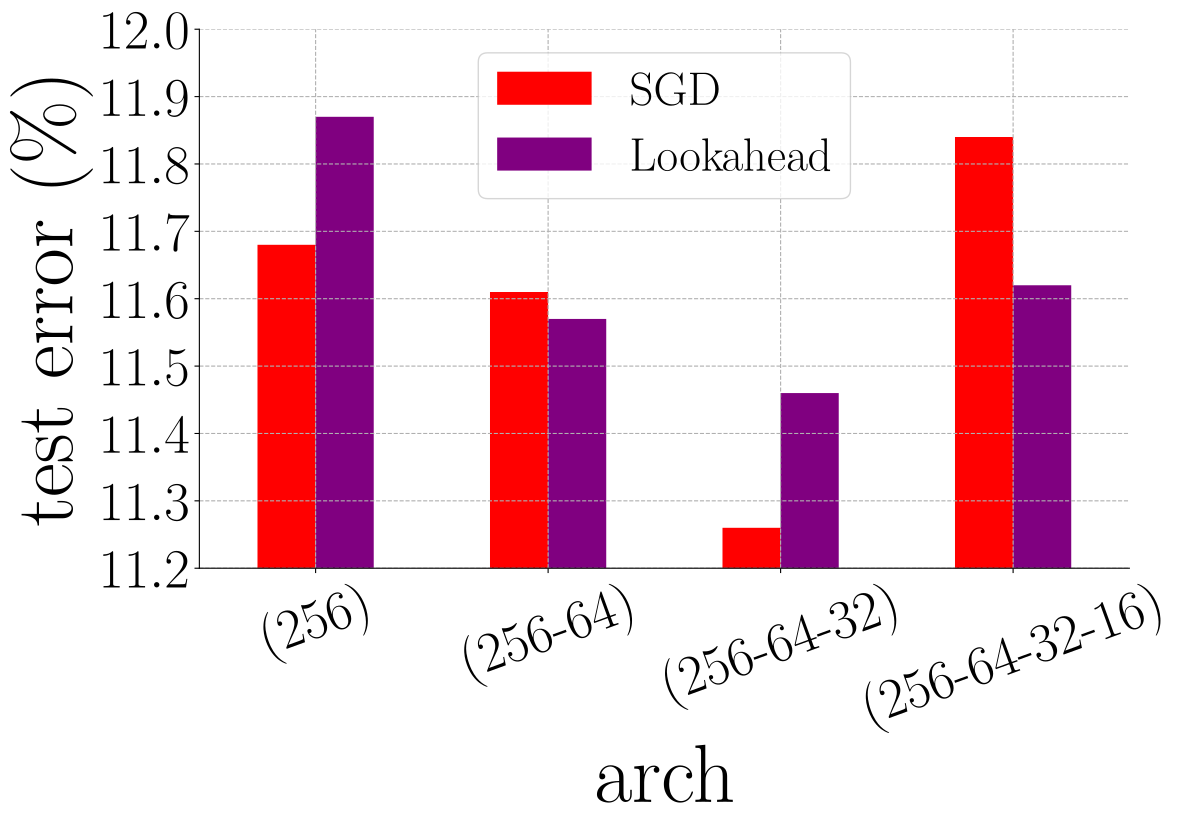}
    \caption{Effects of $\alpha$ (left), $\beta$ (middle), and architecture (right) on CIFAR-100. When varying with $\alpha$, $\beta=1$ (resp. $\beta=5$) with SGD (resp. Lookahead). When varying with $\beta$, $\alpha=0.01$ (resp. $\alpha=0.001$) with SGD (resp. Lookahead). When using different architectures, (\ie (256), (256-64), (256-64-32), and (256-64-32-16)), $\alpha=0.01$ and $\beta=1$ (resp. $\alpha=0.001$ and $\beta=5$) with SGD (resp. Lookahead).}
    \label{fig:analysis_hyp}
\end{figure*}

\subsection{Effects of Random Noise}
\tabref{tbl:noise} shows the effect of random noise in the training of EfficientNet on CIFAR-100. 
The random noise are generated by a uniform or normal distribution to replace the proposed adjustment by \eqn~(\ref{eqn:predict}). 
Note that {\small $\alpha \Big| \frac{\partial \ell}{\partial z} \Big| v/ |v|$} is part of the proposed learning approach (see \eqn~(\ref{eqn:nml})).
The results shows that normalizing the adjustment vector to an appropriate range is definitely required.
This is because the gradient is subtle and sophisticated and a large adjustment vector could lead to a divergence in training.
Moreover, properly injecting some random noise using the proposed approach (see \eqn (\ref{eqn:nml}) and (\ref{eqn:adj})) improves the performance.
Yet, the noise is still less effective than the adjustment vector generated by GAL.

\subsection{Training Time}
\label{sec:train_time}
To understand the computation overhead,
we report the training time of using the baseline and the proposed method in \tabref{tab:train_time}.
In the ImageNet experiment, the learning process without the proposed GAL takes 0.3907 seconds per image to train the model,
and takes 0.4027 seconds per image with the proposed GAL. 
The extra time (\ie~12 milliseconds) w.r.t. the proposed method is used for the forward and backward process.
Similarly, the proposed method take extra 15 (11) milliseconds for training on CIFAR-10 (CIFAR-100).
Note that the experiments on CIFAR-10 and CIFAR-100 are run on a workstation equipped with 4 NVIDIA 2080 Ti GPUs, while the experiments on ImageNet are run on a workstation equipped with 8 NVIDIA V100 GPUs.

\begin{table}[!t]
	\centering
	\caption{\label{tbl:noise}
	    Effects of random noise generated by a uniform or normal distribution on the training of EfficientNet with SGD on CIFAR-100. The error rate of the standard learning process is 11.81\% while that of GAL is 11.37\%.
	}
	\adjustbox{width=0.9\columnwidth}{
	\begin{tabular}{L{12ex} L{25ex} C{9ex}}
	    \toprule
      $v$ & $\tilde{v}$ & error (\%) \\ \midrule
      $\mathcal{U}(-1,1)$ & $\alpha \Big| \frac{\partial \ell}{\partial z} \Big| v/ |v|$ \; {\small (\eqn (\ref{eqn:nml}))} & 11.62 \\ 
      $\mathcal{U}(-1,1)$ & $\alpha \, v/ |v|$ & 97.20 \\  \midrule
      $\mathcal{N}(0,1)$ & $\alpha \Big| \frac{\partial \ell}{\partial z} \Big| v/ |v|$ \; {\small (\eqn (\ref{eqn:nml}))} & 11.65 \\ 
      $\mathcal{N}(0,1)$ & $\alpha \, v/ |v|$ & 98.36 \\ 
      \bottomrule
     \end{tabular}}
\end{table}

\begin{table}[!t]
    \centering
    \caption{Training time of using the proposed method on each image, in comparison to the one of using the baseline. Note that the proposed gradient adjustment only takes place at the training phase. In other words, the test time w.r.t. the model trained with the proposed method should be identical to the one w.r.t. the model trained with the baseline method.}
    \adjustbox{width=0.9\columnwidth}{
\textcolor{black}{
	\begin{tabular}{L{18ex} L{18ex} C{10ex}}
		\toprule
		Dataset & Method & Time (s) \\
		\cmidrule(lr){1-1} \cmidrule(lr){2-2} \cmidrule(lr){3-3}
		\multirow{2}{*}{ImageNet} & Baseline & 0.3907 \\
		& Proposed & 0.4027 \\ \midrule
		\multirow{2}{*}{CIFAR-10} & Baseline & 0.5292 \\
		& Proposed & 0.5444 \\ \midrule
        \multirow{2}{*}{CIFAR-100} & Baseline & 0.5355 \\
		& Proposed & 0.5465 \\ 
		\bottomrule	
	\end{tabular}
}
}
    \label{tab:train_time}
\end{table}

\subsection{Effects of Hyperparameters}
\label{sec:hyp}
We analyse the effects of $\alpha$, $\beta$ and various GAL architectures with SGD and Lookahead on CIFAR-100. 
The performance is shown in \figref{fig:analysis_hyp}. 
The proposed GAL uses hyperparameters $\alpha=0.01$, $\beta=1$,  and architecture = (256-64-32) with SGD, and $\alpha=0.001$, $\beta=5$,  and architecture = (256-64-32) with Lookahead.
We vary one hyperparameter at a time while the other hyperparameters are kept unchanged in each plot.
As shown in the figure, the range $[0.0001, 0.01]$ of $\alpha$ consistently leads to lower classification errors. 
In contrast, classification errors are sensitive to $\beta$, which is optimizer-dependent. 
$\beta=1$ leads to the best performance with SGD, while $\beta=5$ leads to the best performance with Lookahead. 
Regarding the effects of architectures, We use architectures (256), (256-64), (256-64-32), and (256-64-32-16) with in \figref{fig:analysis_hyp} (right). 
The four architectures have 51.2K, 48.3K, 47.2K, and 46.1K parameters, respectively.
Overall, (256-64-32) gives rise to lower classification errors than the other architectures with SGD and Lookahead, while corresponding computational overhead is relatively low.

\subsection{Effects of Various Update Policies}
As introduced in Algorithm~\ref{alg:gl}, line 7-10, if the tentative loss $\ell_{\sigma} (z-\tilde{\eta}g)$ is less than or equal to the loss $\ell_{\sigma} (z)$, we use $g$ to update the gradients w.r.t. the weights according to the chain rule. 
We denote this case as \textit{line 7 ($\le$)}. 
In the standard process, $\frac{\partial \ell}{\partial z}$ is always used to update the gradients w.r.t. the weights and we denote this case as \textit{line 10}. 
We discuss two other possible update policies, \ie~always using $g$ and using $g$ if $\ell_{\sigma} (z-\tilde{\eta}g)$ is less than the loss. 
We denote these two cases as \textit{line 8} and \textit{line 7 ($<$)}, respectively. 
As shown in \figref{fig:analysis_cond}, policy \textit{line 8} outperforms \textit{line 10} but is not optimal as \textit{line 7 ($\le$)}.
This is because as the training process is close to the local minimum, the loss remainder is much smaller and \textit{line 10} would be more efficient than \textit{line 8}. 
Moreover, \textit{line 7 ($\le$)} is slightly better than \textit{line 7 ($<$)}.

\begin{figure}[!t]
	\centering
	\subfloat[Update policy]{%
		\includegraphics[width=0.38\linewidth]{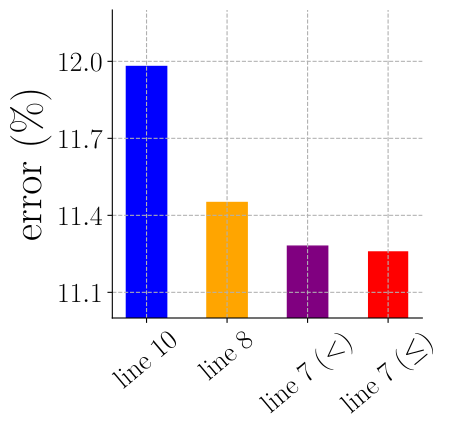}%
		\label{fig:analysis_cond}%
	}%
	\hfill%
	\subfloat[\# of improved samples]{%
		\includegraphics[width=0.58\linewidth]{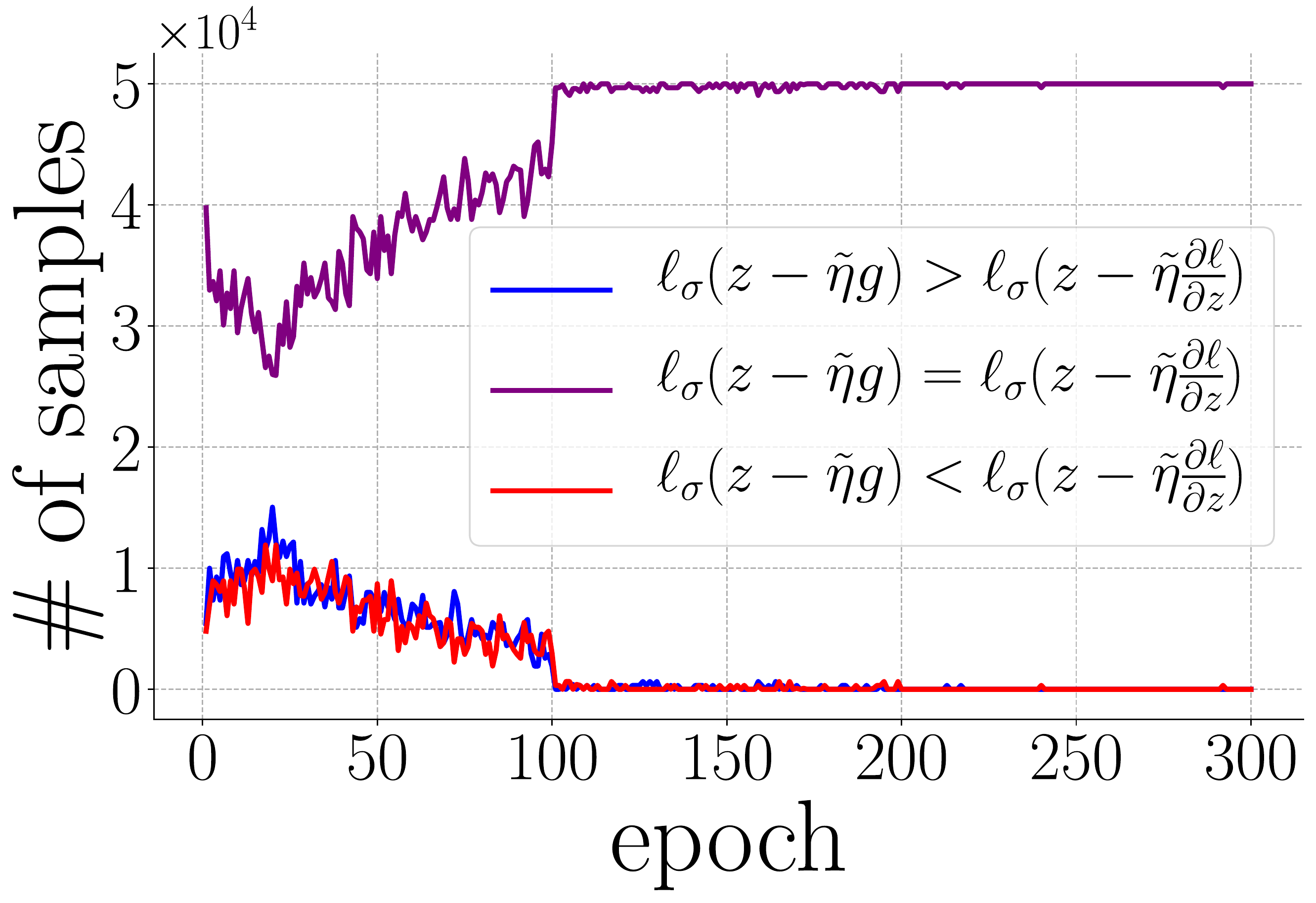}%
		\label{fig:analysis_tttloss}%
	}
	\caption{\label{fig:analysis}
    	Ablation study of the proposed GAL with SGD on CIFAR-100. 
    	(a) Effects of update policy refers to Algorithm~\ref{alg:gl} line 7-10. \textit{line x} means we use line x to generate the update, while \textit{line 7 ($<$)} means that we modify the if statement in the line 7 as $\ell_{\sigma} (z-\tilde{\eta}g) < \ell_{\sigma} (z)$. 
    	(b) Effects of adjusted gradient $g$ and vanilla gradient $\frac{\partial \ell}{\partial z}$ with tentative loss.
    	}
\end{figure}

\subsection{Adjusted Gradient vs. Vanilla Gradient}
As the proposed GAL aims to yield adjusted gradient $g$, 
it would be good to know whether $g$ leads to better descent than $\frac{\partial \ell}{\partial z}$, \ie lower loss. 
To do so, we use tentative loss to test $g$ and $\frac{\partial \ell}{\partial z}$.
\figref{fig:analysis_tttloss} shows how many times $g$ outperforms $\frac{\partial \ell}{\partial z}$ on samples. 
The results implies that GAL indeed helps adjust the vanilla gradients with tentative loss on considerable amount of samples in the early stage.

\subsection{MLPs vs. CNNs}
\label{sec:cnn}
We explore the effects of using CNNs, instead of MLPs, as the proposed gradient adjustment modules on the classification task. The results of the analysis are reported in \tabref{tab:arch_cnn}. It can be seen that CNNs have much larger numbers of parameters than MLPs (except the single layer variant), but achieve lower performance than MLPs. MLPs is a desired choice and their architectures are well aligned with the fact that the discriminative features from modern deep learning models are usually one-dimensional.

\begin{table}[!t]
    \centering
    \caption{Effects of MLPs and CNNs with SGD on CIFAR-100. In the case of CNNs, 1-d features (100) would be re-organized to 2-d features (\ie 10$\times$10), and then multiple convolutional layers with $3\times$3 kernels would be performed on the 2-d features. For example, CNN (256-64) indicates a convolutional layer with 256 3$\times$3 kernels is followed by a convolutional layer with 64 3$\times$3 kernels. Both MLPs and CNNs have a final fully-connected layer, but CNNs have an additional adaptive spatial pooling layer prior to the final layer, which reduces width and height dimensions to 1.}
    \adjustbox{width=1\columnwidth}{
\textcolor{black}{
	\begin{tabular}{L{10ex} C{18ex} C{10ex} C{10ex}}
		\toprule
		Model & Arch & Parameters & Error (\%) \\
		\cmidrule(lr){1-1} \cmidrule(lr){2-2} \cmidrule(lr){3-3} \cmidrule(lr){4-4} 
		\multirow{4}{*}{MLP} & (256) & 51.2K & 11.68 \\
		& (256-64) & 48.3K & 11.61 \\
		& (256-64-32) & 47.2K & 11.26 \\
		& (256-64-32-16) & 46.1K & 11.84 \\ \midrule
		\multirow{4}{*}{CNN} & (256) & 28.1K & 12.15 \\
		& (256-64) & 156.4K & 12.13 \\
		& (256-64-32) & 171.7K & 11.88 \\
		& (256-64-32-16) & 174.7K & 11.75 \\
		\bottomrule	
	\end{tabular}
}
}
    \label{tab:arch_cnn}
\end{table}

\section{Conclusion}
We propose a new learning approach which formulates the remainder as a learning-based problem and leverages the knowledge learned from the past approximations to enhance the learning. 
To this end, we propose a gradient adjustment learning (GAL) method that employs a model to learn to predict the adjustments on gradients in an end-to-end fashion, 
which is easy and simple to adapt to the standard training process. 
Correspondingly, we provide theoretical understanding and experimental results with state-of-the-art models and optimizers in image classification, object detection, and regression tasks. 
The findings on the experimental results are aligned with the theoretical understanding on the error bound.
One intriguing extension of this work is to explore the model design to capture the subtle characteristics of gradient adjustment vectors for the adjustment prediction. 

%



\ifCLASSOPTIONcaptionsoff
  \newpage
\fi

\begin{IEEEbiography}[{\includegraphics[width=1in,height=1.25in,clip,keepaspectratio]{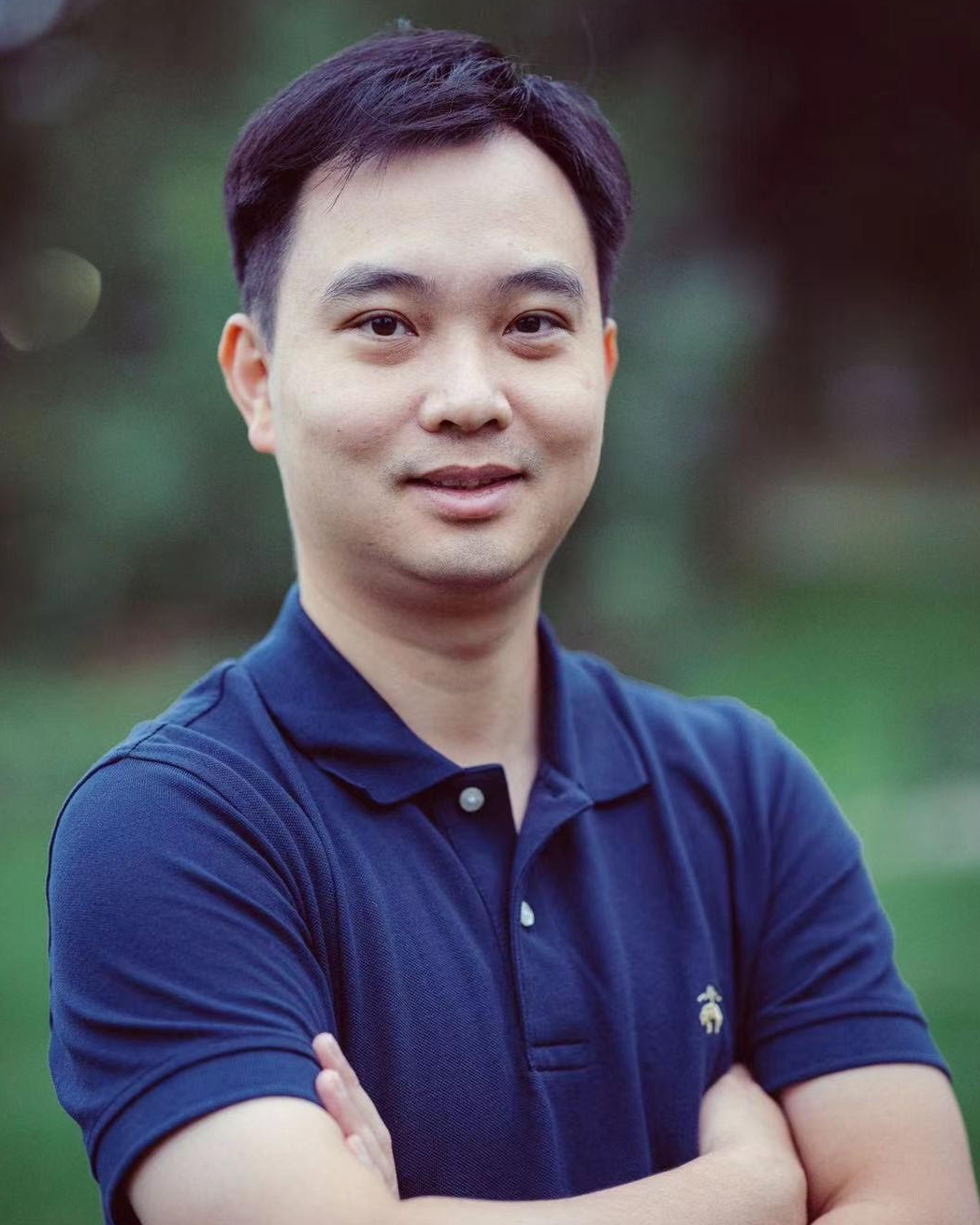}}]{Yan Luo}
	is currently pursuing a Ph.D. degree with the Department of Computer Science and Engineering at the University of Minnesota (UMN), Twin Cities. Prior to UMN, he joined the Sensor-enhanced Social Media (SeSaMe) Centre, Interactive and Digital Media Institute at the National University of Singapore (NUS), as a Research Assistant. Also, he joined the Visual Information Processing Laboratory at the NUS as a Ph.D. Student. He received a B.Sc. degree in computer science from Xi'an University of Science and Technology. He worked in the industry for several years on a distributed system. His research interests include computer vision, computational visual cognition, and deep learning. He is a student member of the IEEE since 2022.
\end{IEEEbiography}


\begin{IEEEbiography}[{\includegraphics[width=1in,height=1.25in,clip,keepaspectratio]{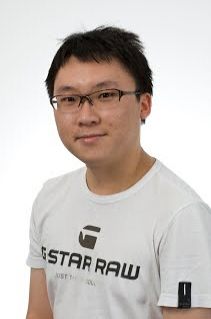}}]{Yongkang Wong}
	is a senior research fellow at the School of Computing, National University of Singapore. He is also the Assistant Director of the NUS Centre for Research in Privacy Technologies (N-CRiPT). He obtained his BEng from the University of Adelaide and PhD from the University of Queensland. He has worked as a graduate researcher at NICTA's Queensland laboratory, Brisbane, OLD, Australia, from 2008 to 2012. His current research interests are in the areas of Image/Video Processing, Machine Learning, Action Recognition, and Human Centric Analysis. He is a member of the IEEE since 2009.
\end{IEEEbiography}

\begin{IEEEbiography}[{\includegraphics[width=1in,height=1.25in,clip,keepaspectratio]{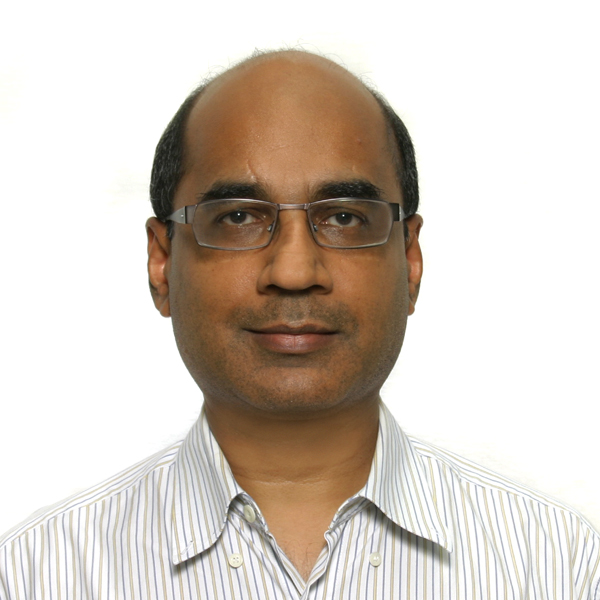}}]{Mohan S. Kankanhalli}
	is Provost's Chair Professor of Computer Science at the National University of Singapore (NUS). He is the Dean of NUS School of Computing and he also directs N-CRiPT (NUS Centre for Research in Privacy Technologies) which conducts research on privacy on structured as well as unstructured (multimedia, sensors, IoT) data. Mohan obtained his BTech from IIT Kharagpur and MS \& PhD from the Rensselaer Polytechnic Institute. Mohan’s research interests are in Multimedia Computing, Computer Vision, Information Security \& Privacy and Image/Video Processing. He has made many contributions in the area of multimedia \& vision – image and video understanding, data fusion, visual saliency as well as in multimedia security – content authentication and privacy, multi-camera surveillance. Mohan is a Fellow of IEEE.
\end{IEEEbiography}

\begin{IEEEbiography}[{\includegraphics[width=1in,height=1.25in,clip,keepaspectratio]{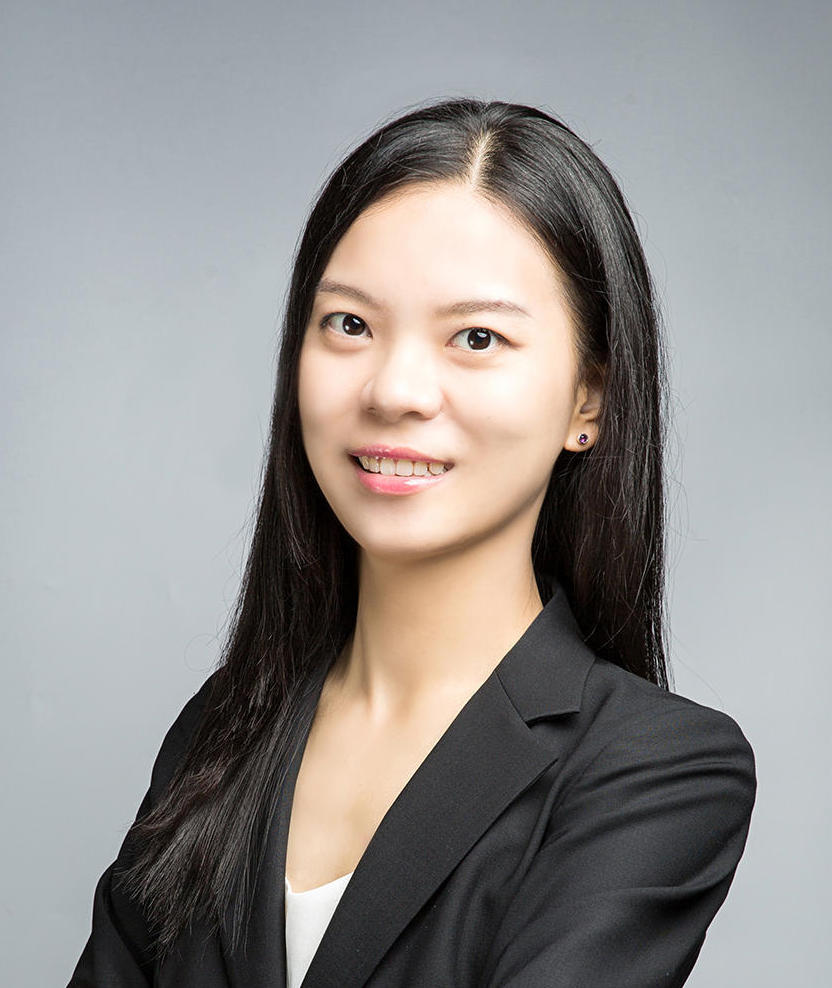}}]{Qi Zhao}
	is an associate professor in the Department of Computer Science and Engineering at the University of Minnesota, Twin Cities. Her main research interests include computer vision, machine learning, cognitive neuroscience, and healthcare. She received her Ph.D. in computer engineering from the University of California, Santa Cruz in 2009. She was a postdoctoral researcher in the Computation \& Neural Systems at the California Institute of Technology from 2009 to 2011. Before joining the University of Minnesota, Qi was an assistant professor in the Department of Electrical and Computer Engineering and the Department of Ophthalmology at the National University of Singapore. She has published more than 100 journal and conference papers, and edited a book with Springer, titled Computational and Cognitive Neuroscience of Vision, that provides a systematic and comprehensive overview of vision from various perspectives. She serves as an associate editor of IEEE TNNLS and IEEE TMM, as a program chair WACV’22, and as an organizer and/or area chair for CVPR and other major venues in computer vision and AI regularly. She is a member of the IEEE since 2004.
\end{IEEEbiography}




\end{document}